\tiny\color{gray},
\author[1,*]{Alexander Meulemans}
\author[1,*]{Seijin Kobayashi}
\author[1]{Johannes von Oswald}
\author[1]{Nino Scherrer}
\author[1,2,3]{Eric Elmoznino}
\author[1,2,4,5]{Blake Richards}
\author[1,2,3,5]{Guillaume Lajoie}
\author[1]{Blaise Agüera y Arcas}
\author[1]{João Sacramento}
\affil[1]{Google, Paradigms of Intelligence Team}
\affil[2]{Mila - Quebec AI Institute}
\affil[3]{Université de Montréal}
\affil[4]{McGill University}
\affil[5]{CIFAR}
\affil[*]{Equal contribution}
\newtheorem{theorem}{Theorem}[section]
\DeclareMathSymbol{\smin}{\mathbin}{AMSa}{"39}
\newcommand{\calA}{\mathcal{A}}
\newcommand{\calH}{\mathcal{H}}
\newcommand{\calI}{\mathcal{I}}
\newcommand{\calO}{\mathcal{O}}
\newcommand{\calR}{\mathcal{R}}
\newcommand{\calS}{\mathcal{S}}
\newcommand{\bbE}{\mathbb{E}}
\newcommand{\expect}[2]{\mathbb{E}_{#1}\left[ #2 \right]}
\newcommand*{\Scale}[2][4]{\scalebox{#1}{$#2$}}
\newcommand*{\opti}{^{\Scale[0.42]{\bigstar}i}}
\newcommand*{\phimi}{\phi^{-i}}
\def\eqref#1{equation~\ref{#1}}
\def\1{\bm{1}}
\DeclareMathAlphabet{\mathsfit}{\encodingdefault}{\sfdefault}{m}{sl}
\SetMathAlphabet{\mathsfit}{bold}{\encodingdefault}{\sfdefault}{bx}{n}
\DeclareMathOperator*{\argmax}{arg\,max}
\begin{abstract}
Self-interested individuals often fail to cooperate, posing a fundamental challenge for multi-agent learning. How can we achieve cooperation among self-interested, independent learning agents? Promising recent work has shown that in certain tasks cooperation can be established between ``learning-aware" agents who model the learning dynamics of each other. Here, we present the first unbiased, higher-derivative-free policy gradient algorithm for learning-aware reinforcement learning, which takes into account that other agents are themselves learning through trial and error based on multiple noisy trials. We then leverage efficient sequence models to condition behavior on long observation histories that contain traces of the learning dynamics of other agents. Training long-context policies with our algorithm leads to cooperative behavior and high returns on standard social dilemmas, including a challenging environment where temporally-extended action coordination is required. Finally, we derive from the iterated prisoner's dilemma a novel explanation for how and when cooperation arises among self-interested learning-aware agents.
\end{abstract}
\title{Multi-agent cooperation through learning-aware policy gradients}
\begin{document}

\maketitle

\section{Introduction}

\looseness=-1From self-driving autonomous vehicles to personalized assistants, there is a rising interest in developing agents that can learn to interact with humans \citep{collins_building_2024, gweon_socially_2023}, and with each other \citep{park_generative_2023, vezhnevets_generative_2023}. However, multi-agent learning comes with significant challenges that are not present in more conventional single-agent paradigms. This is perhaps best seen through the study of ``social dilemmas", general-sum games which model the tension between cooperation and competition in abstract form \citep{von_neumann_theory_1947}. Without further assumptions, letting agents independently optimize their individual objectives on such games results in poor outcomes and a lack of cooperation \citep{tan_multi-agent_1993,claus_dynamics_1998}.

First, for general-sum games, reaching an equilibrium point does not necessarily imply appropriate behavior because there can be many sub-optimal equilibria \citep{fudenberg_theory_1998,shoham_multiagent_2008}. Second, the control problem an agent faces is non-stationary from its own viewpoint, because other agents themselves simultaneously learn and adapt \citep{hernandez-leal_survey_2017}. Centralized training algorithms sidestep non-stationarity issues by sharing agent information \citep{sunehag_value-decomposition_2017}, but this transformation into a global learning problem is usually prohibitively costly, and impossible to implement when agents must be developed separately \citep{zhang_multi-agent_2021}.

The above two fundamental issues have hindered progress in multi-agent reinforcement learning, and have limited our understanding of how self-interested agents may reach high returns when faced with social dilemmas. In this paper, we join a promising line of work on ``learning awareness" that has been shown to improve cooperation \citep{foerster_learning_2018}. The key idea behind such approaches is to take into account the learning dynamics of other agents explicitly, rendering it into a meta-learning problem \citep{schmidhuber_evolutionary_1987,bengio_learning_1990,hochreiter_learning_2001}.

\looseness=-1 The present paper contains two main novel results on learning awareness in general-sum games. First, we introduce a new learning-aware reinforcement learning rule derived as a policy gradient estimator. Unlike existing methods \citep{foerster_learning_2018,foerster_dice_2018,xie_learning_2021,balaguer_good_2022,lu_model-free_2022,willi_cola_2022,cooijmans_meta-value_2023,khan_scaling_2024, aghajohari2024loqa}, it has a number of desirable properties: (i) it does not require computing higher-order derivatives, (ii) it is provably unbiased, (iii) it can model minibatched learning algorithms, (iv) it is applicable to scalable architectures based on recurrent sequence policy models, and (v) it does not assume access to privileged information, such as the opponents' policies or learning rules. Our policy gradient rule significantly outperforms previous model-free methods in the general-sum game setting. In particular, we show that efficient learning-aware learning suffices to reach cooperation in a challenging sequential social dilemma involving temporally extended actions \citep{leibo_multi-agent_2017} that we adapt from the Melting Pot suite \citep{agapiou_melting_2023}. Second, we analyze the iterated prisoner's dilemma (IPD), a canonical model for studying cooperation among self-interested agents \citep{rapoport_prisoners_1974,axelrod_evolution_1981}. Our analysis uncovers a novel mechanism for the emergence of cooperation through learning awareness, and explains why the seminal learning with opponent-learning awareness algorithm due to \citet{foerster_learning_2018} led to cooperation in the IPD.


\section{Background and problem setup}
We consider partially observable stochastic games \citep[POSGs;][]{kuhn_extensive_1953} consisting of a tuple \\
$\left(\calI, \calS, \calA, P_t, P_r, P_i, \calO, P_o, \gamma, T \right)$ with $\calI = \{1,\hdots,n\}$ a finite set of $n$ agents, $\calS$ the state space, $\calA = \times_{i \in \calI} \calA^i$ the joint action space, $P_t(S_{t+1} \mid S_t, A_t)$ the state transition distribution, $P_i(S_0)$ the initial state distribution, $P_r = \times_{i \in \calI}P_r^i(R \mid S, A)$ the joint factorized reward distribution with $R=\{R^i\}_{i\in \calI}$ and bounded rewards $R^i$, $\calO=\times_{i \in \calI} \calO^i$ the joint observation space, $P_o(O_t \mid S_t, A_{t-1})$ the observation distribution, $\gamma$ the discount factor, $t$ the time step, and $T$ the horizon. We use superscript $i$ to indicate agent-specific actions, observations and rewards, $-i$ to indicate all agent indices except $i$, and we omit the superscript for joint actions, observations and rewards. As agents only receive partial state information, they benefit from conditioning their policies $\pi^i(a^i_t\mid x^i_t;\phi^i)$ on the observation history $x^i_t = \{o^i_k\}_{k=1}^t$ \citep{astrom_optimal_1965,kaelbling_planning_1998}, with $\phi^i$ the policy parameters. Note that the observations can contain the agent's actions on previous timesteps.

\subsection{General-sum games and their challenges}
We focus on general-sum games, where each agent has their own reward function, possibly different from those of other agents. Specifically, we consider mixed-motive general-sum games that are neither zero-sum nor fully-cooperative. Analyzing and solving such general-sum games while letting every agent individually and independently maximize their rewards \citep[a setting often referred to as ``fully-decentralized reinforcement learning'';][]{albrecht_multi-agent_2024} is a longstanding problem in the fields of machine learning and game theory for two primary reasons, described below.

\textbf{Non-stationarity of the environment.} In a general-sum game, each agent aims to maximize its expected return $J^i(\phi^i, \phi^{-i}) = \expect{P^{\phi^i, \phi^{-i}}}{\sum_{t=1}^T \gamma^t R^i_t}$, with $P^{\phi^i, \phi^{-i}}$ the distribution over environment trajectories $x_T$ induced by the environment dynamics, the policy $\pi^i(a^i\mid x^i;\phi^i)$ of agent $i$, and the policies $\pi^{-i}$ of all other agents. Importantly, the expected return $J^i(\phi^i, \phi^{-i})$ does not only depend on the agent's own policy, but also on the current policies of the other agents. As other agents are updating their policies through learning, the environment which includes the other agents is effectively non-stationary from a single agent's perspective. Furthermore, the actions of an agent can influence this non-stationarity by changing the observation histories of other agents, on which they base their learning updates. 

\textbf{Equilibrium selection.} It is not clear how to identify appropriate policies for a general-sum game. To see this, let us first briefly revisit the concept of a Nash equilibrium \citep{nash_jr_equilibrium_1950}. For a fixed set of co-player policies $\phi^{-i}$, one can compute a best response, which for agent $i$ is given by $\phi\opti = \argmax_{\phi^i} J^i(\phi^i, \phi^{-i})$. When all current policies $\phi$ are a best response against each other, we have reached a Nash equilibrium, where no agent is incentivized to change its policy anymore, $\forall i, \tilde{\phi}^i: J^i(\tilde{\phi}^i, \phi^{-i}) \leq J^i(\phi^i, \phimi)$. Various ``folk theorems" show that for most POSGs of decent complexity, there exist infinitely many Nash equilibria \citep{fudenberg_theory_1998,shoham_multiagent_2008}. This lies at the origin of the equilibrium selection problem in multi-agent reinforcement learning: it is not only important to let a multi-agent system converge to a Nash equilibrium, but also to target a \textit{good} equilibrium, as Nash equilibria can be arbitrarily bad. Famously, unconditional mutual defection in the infinitely iterated prisoner's dilemma is a Nash equilibrium, with strictly lower expected returns for all agents compared to the mutual tit-for-tat Nash equilibrium \citep{axelrod_evolution_1981}.

\subsection{Co-player learning awareness}
\label{sect:background-shaping}
\looseness=-1We aim to address the above two major challenges of multi-agent learning in this paper. Our work builds upon recent efforts that are based on adding a meta level to the multi-agent POSG, where the higher-order variable represents the learning algorithm used by each agent \citep{lu_model-free_2022,balaguer_good_2022,khan_scaling_2024}. In this meta-problem, the environment includes the learning dynamics of other agents. At the meta-level, one episode now extends across multiple episodes of actual game play, allowing the ``ego agent", $i$, to observe how its co-players, $-i$, learn, see Fig.~\ref{fig:experience-diagram}. The goal of this meta-agent may be intuitively understood as that of \emph{shaping} co-player learning to its own advantage.

Provided that co-player learning algorithms remain constant, the above reformulation yields a single-agent problem that is amenable to standard reinforcement learning techniques. This setup is fundamentally asymmetric: while the \textit{meta agent} (ego agent) is endowed with co-player learning awareness (i.e.,\ observing multiple episodes of game play), the remaining agents remain oblivious to the fact that the environment is non-stationary. We thus refer to them here as \textit{naive agents} (see Fig.~\ref{fig:experience-diagram}B). Despite this asymmetry, prior work has observed that introducing a learning-aware agent in a group of naive learners often leads to better learning outcomes for all agents involved, avoiding mutual defection equilibria \citep{lu_model-free_2022,balaguer_good_2022,khan_scaling_2024}. Moreover, \citet{foerster_learning_2018} has shown that certain forms of learning awareness can lead to the emergence of cooperation even in symmetric cases, a surprising finding that is not yet well understood.

These observations motivate our study, leading us to derive novel efficient learning-aware reinforcement learning algorithms, and to investigate their efficacy in driving a group of agents (possibly composed of both meta and naive agents) towards more beneficial equilibria. Below, we proceed by first formalizing asymmetric co-player shaping problems, which we solve with a novel policy gradient algorithm (Section~\ref{sec:coala}). In Section \ref{sec:coop_hypothesis}, we then return to the question of why and when co-player learning awareness can result in cooperation in multi-agent systems with equally capable agents.

\begin{figure}[htbp!]
    \centering
    \includegraphics[width=1.0\linewidth]{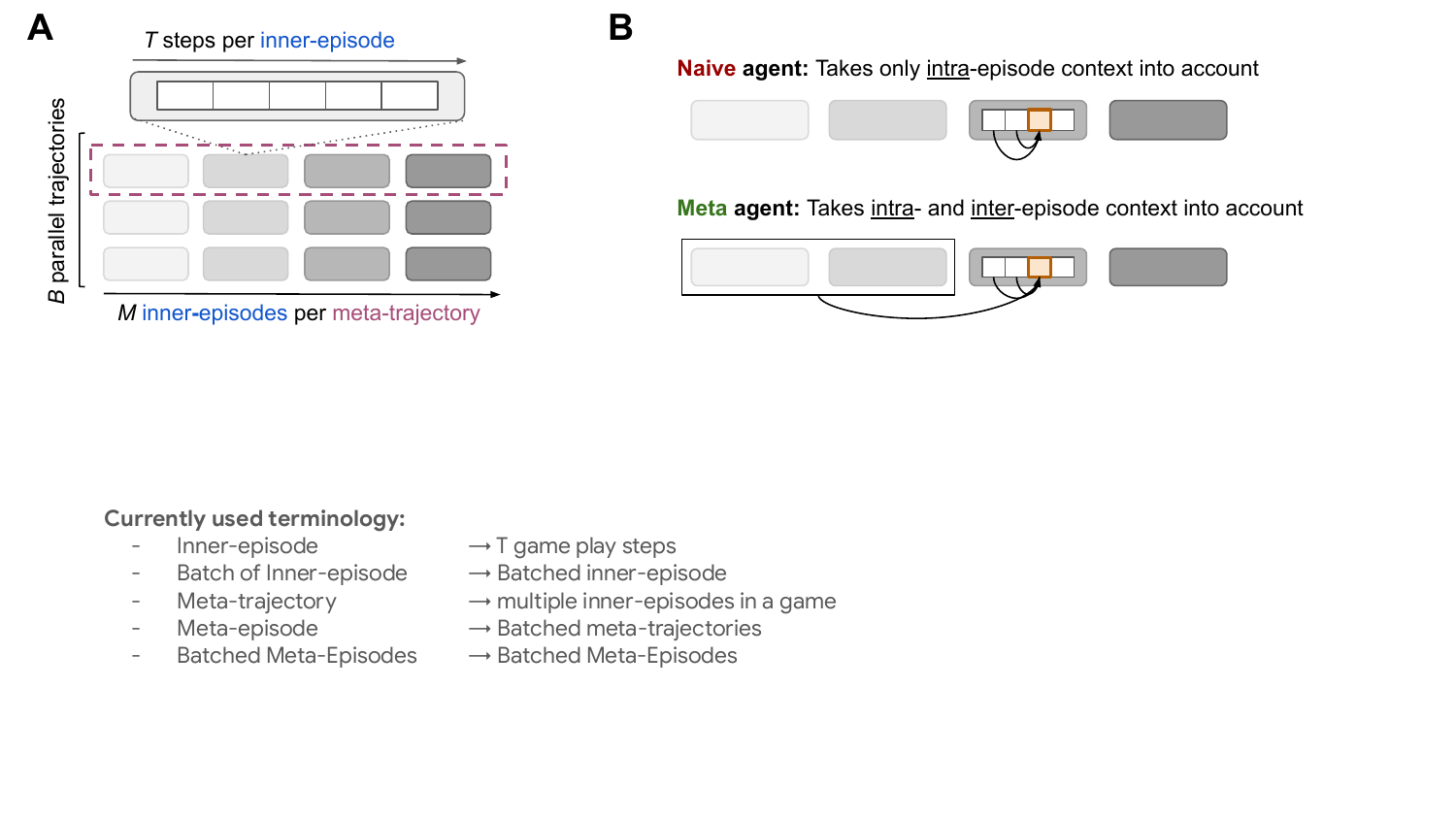}
    \caption{\textbf{A.} Experience data terminology. Inner-episodes comprise $T$ steps of (inner) game play, played between agents $B$ times in parallel, forming a batch of inner-episodes. A given sequence of $M$ inner-episodes forms a meta-trajectory, thus comprising $MT$ steps of inner game play. The collection of $B$ meta-trajectories forms a meta-episode. \textbf{B.} During game play, a naive agent takes only the current episode context into account for decision making. In contrast, a meta agent takes the full long context into account. Seeing multiple episodes of game play endows a meta agent with learning awareness.}
    \label{fig:experience-diagram}
\end{figure}

\paragraph{Co-player shaping.} Following \citet{lu_model-free_2022}, we first introduce a meta-game with a single meta-agent whose goal is to shape the learning of naive co-players to its advantage. This meta-game is defined formally as a single-agent partially observable Markov decision process (POMDP) $(\tilde{\calS}, \tilde{\calA}, \tilde{P}_t, \tilde{P}_r, \tilde{P}_i, \tilde{\calO}, \tilde{\gamma}, M)$. The meta-state consists of the policy parameters $\phimi$ of all co-players together with the agent's own parameters $\phi^i$. The meta-environment dynamics represent the fixed learning rules of the co-players, and the meta-reward distribution represents the expected return $J^i(\phi^i, \phimi)$ collected by agent $i$ during an inner episode, with ``inner" referring to the actual game being played. The initialization distribution $\tilde{P}_i$ reflects the policy initializations of all players. Finally, we introduce a meta-policy $\mu(\phi^i_{m+1} \mid \phi^i_m, \phimi_m;\theta)$ parameterized by $\theta$, that decides the update to the parameter $\phi^i_{m+1}$ (the meta-action) to shape the co-player learning towards highly rewarding regions for agent $i$ over a horizon of $M$ meta steps. This leads to the co-player shaping problem
\begin{align}\label{eqn:shaping_problem}
    \max_{\mu}\bbE_{\tilde{P}_i(\phi^i_0, \phimi_0)}\expect{\tilde{P}^\mu}{\sum_{m=1}^M J^i(\phi_m^i, \phimi_m)},
\end{align}
with $\tilde{P}^\mu$ the distribution over parameter trajectories induced by the meta-dynamics and meta-policy.

\subsection{Single-level co-player shaping by leveraging sequence models}
\label{sect:single-level-shaping}
In this paper, we combine both inner- and meta-policies in a single long-context policy, conditioning actions on long observation histories spanning multiple inner game episodes (see Fig.~\ref{fig:experience-diagram}B). Instead of hand-designing the co-player learning algorithms, we instead let meta-learning discover the algorithms used by other agents. This way, we leverage the in-context learning and inference capabilities of modern neural sequence models \citep{brown_language_2020,rabinowitz_meta-learners_2019,akyurek_what_2023,von_oswald_uncovering_2023,li_transformers_2023} to both simulate in-context an inner policy, as well as strategically update it based on current estimates of co-player policies. This philosophy has been adopted in \citet{khan_scaling_2024}, in which a flat policy is optimized using an evolutionary algorithm. We compare to this method in Section~\ref{sect:policy-gradient-derivation-analysis}, after we derive our meta reinforcement learning algorithm.

To proceed with this approach, we must first reformulate the meta-game. In particular, we must deal with a difficulty that is not present in single-agent meta reinforcement learning \citep[e.g.,][]{wang_learning_2016,duan_rl2_2017}, which stems from the fact that co-players generally update their own policies based on multiple inner episodes (``minibatches"), without which reinforcement learning cannot practically make progress. Here, we solve this by defining the environment dynamics over $B$ parallel trajectories, with $B$ the size of the minibatch of inner episode histories that co-players use to update their policies at each inner episode boundary (see Fig.~\ref{fig:experience-diagram}A). 

\textbf{Batched co-player shaping POMDP.} \looseness=-1We define the batched co-player shaping POMDP \\
$(\bar{\calS}, \bar{\calA}, \bar{P}_t, \bar{P}_r, \bar{P}_i, \bar{\calO}, \bar{\gamma}, M, B)$, with hidden states consisting of the hidden environment states of the $B$ ongoing inner episodes, combined with the current parameters $\phimi_m$ of all co-players; environment dynamics $\bar{P}_t$ simulating $B$ environments in parallel, combined with updating the co-player's policy parameters $\phimi$, and resetting the environments at each inner episode boundary; initial state distribution $\bar{P}_i$ that initializes the co-player policies and initializes the environments for the first inner episode batch; and finally, an ego-agent policy $\bar{\pi}^i(\bar{a}^i_l \mid \bar{h}^i_l; \phi^i)$ parameterized by $\phi^i$, which determines a distribution over the batched action $\bar{a}^i_l = \{a^{i,b}_l\}_{b=1}^B$, based on the batched long history $\bar{h}^i_{l} = \{h^{i,b}_l\}_{b=1}^B$. We refer to each element of the latter as a long history $h^{i,b}_{l}$, with long time index $l$ running across multiple episodes, from $l=1$ until $l=MT$. It should be contrasted to the inner episode history $x_t^i$, which runs from $t=1$ to $t=T$ and thus only reflects the current (inner) game history.

The POMDP introduced above suggests using a sequence policy $\bar{\pi}^i(\bar{a}^i_l \mid \bar{h}^i_l; \phi^i)$ that is aware of the full minibatch of long histories and which produces a joint distribution over all current actions in the minibatch. However, as we aim to use our agents not only to shape naive learners, but also to play against/with each other, we require a policy that can be used both in a batch setting with naive learners, and in a single-trajectory setting with other learning-aware agents. 
Within our single-level approach, we achieve this by factorizing the batch-aware policy $\bar{\pi}^i(\bar{a}^i_t \mid \bar{h}^i_l; \phi^i)$ into $B$ independent policies with shared parameters $\phi^i$, $\bar{\pi}^i(\bar{a}^i_l \mid \bar{h}^i_l; \phi^i) = \prod_{b=1}^B \pi^i(a^{i,b}_l \mid h^{i,b}_l;\phi^i)$.
Thanks to the batched POMDP, we can now pose co-player shaping as a standard (single-level, single-agent) expected return maximization problem:
\begin{align}\label{eqn:flat_shaping_problem}
    \max_{\phi^i}\expect{\bar{P}^{\phi^i}}{\frac{1}{B}\sum_{b=1}^B \sum_{l=0}^{MT} R^{i,b}_l}.
\end{align}
This formulation is the key for obtaining an efficient policy gradient co-player shaping algorithm.

\section{Co-agent learning-aware policy gradients}\label{sec:coala}
\label{sect:policy-gradient}
\begin{figure}[htbp!]
    \centering
    \includegraphics[width=1.0\linewidth]{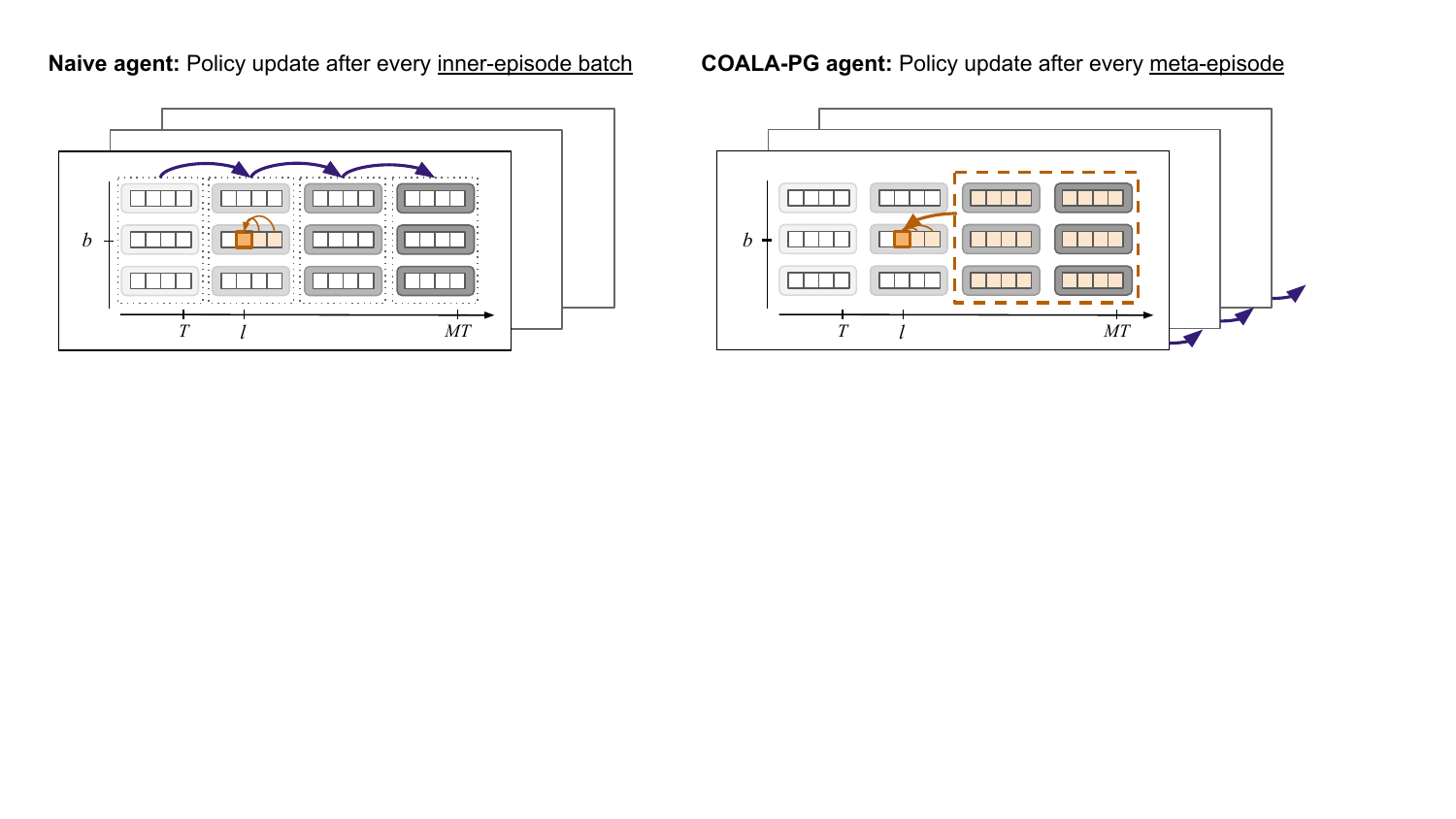}
    \caption{\textbf{Policy update and credit assignment of naive and meta agents.} For {\bf \color{Bittersweet}credit assignment} of action $a_l^{i,b}$,  a naive agent (left) takes only intra-episode context into account. A \texttt{COALA} agent (right) takes inter-episode context across the batch dimension into account. For {\bf \color{BlueViolet} policy updates}, a naive agent aggregates policy gradients over the inner-batch dimension (dashed blocks) and updates their policy between episode boundaries. In contrast, a \texttt{COALA} agent updates their policy  at a lower frequency along the meta-episode dimension. }
    \label{fig:policy_update_visualization}
\end{figure}

\subsection{A policy gradient for shaping naive learners}
\label{sect:policy-gradient-derivation-analysis}

We now provide a meta reinforcement learning algorithm for solving the co-player shaping problem stated in Eq.~\ref{eqn:flat_shaping_problem} efficiently. Under the POMDP introduced in the previous section, co-player shaping becomes a conventional expected return maximization problem. Applying the policy gradient theorem \citep{sutton_policy_1999} to Eq.~\ref{eqn:flat_shaping_problem},  we arrive at \texttt{COALA-PG} (\underline{co}-\underline{a}gent \underline{l}earning-\underline{a}ware \underline{p}olicy \underline{g}radients, c.f.~Theorem~\ref{thrm:coala-pg}): a policy-gradient method compatible with shaping other reinforcement learners that base their own policy updates on minibatches of experienced trajectories. 
\begin{theorem}\label{thrm:coala-pg}
Take the expected shaping return $\bar{J}(\phi^i) = \expect{\bar{P}^{\phi^i}}{\frac{1}{B}\sum_{b=1}^B \sum_{l=0}^{MT} R^{i,b}_l}$, with $\bar{P}^{\phi^i}$ the distribution induced by the environment dynamics $\bar{P}_t$, initial state distribution $\bar{P}_i$ and policy $\phi^i$. Then the policy gradient of this expected return is equal to
\begin{align}\label{eqn:coala-pg}
    \nabla_{\phi^i} \bar{J}(\phi^i) =  \expect{\bar{P}^{\phi^i}}{\sum_{b=1}^B \sum_{l=1}^{MT} \nabla_{\phi} \log \pi^i(a^{i,b}_l \mid h^{i,b}_l) \left(\frac{1}{B} \sum_{l'=l}^{m_lT} r_{l'}^{i,b} +  \frac{1}{B} \sum_{b'=1}^B \sum_{l'=m_lT+1}^{MT} r_{l'}^{i,b'}\right)}.
\end{align}
\end{theorem}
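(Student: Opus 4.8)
The plan is to begin from the ordinary score-function (REINFORCE) estimator for the single-agent return-maximization problem of Eq.~\ref{eqn:flat_shaping_problem}, and then to prune every reward term that is causally unaffected by a given action, using the standard fact that the conditional mean of the score is zero. The only problem-specific work lies in identifying exactly which rewards are downstream of each action, which is dictated by the conditional-independence structure of the batched co-player shaping POMDP.

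First I would write the likelihood of a full meta-trajectory $\tau$ under $\bar{P}^{\phi^i}$. Since the ego policy factorizes as $\bar{\pi}^i(\bar{a}^i_l \mid \bar{h}^i_l;\phi^i) = \prod_{b=1}^B \pi^i(a^{i,b}_l \mid h^{i,b}_l;\phi^i)$ and all remaining factors (the batched environment dynamics $\bar{P}_t$, the co-player parameter updates at inner-episode boundaries, the environment resets, and the initial distribution $\bar{P}_i$) do not depend on $\phi^i$, the trajectory score is $\nabla_{\phi^i}\log \bar{P}^{\phi^i}(\tau) = \sum_{l=1}^{MT}\sum_{b=1}^B \nabla_{\phi}\log\pi^i(a^{i,b}_l \mid h^{i,b}_l)$. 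Differentiating $\bar{J}$ under the integral and applying the log-derivative trick then yields the crude estimator in which each score factor is multiplied by the full normalized return $\frac{1}{B}\sum_{b'=1}^B\sum_{l'=0}^{MT} r^{i,b'}_{l'}$.

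Next I would invoke causality to discard the reward terms that do not lie downstream of a given action. Fix a score factor indexed by $(l,b)$ and let $\mathcal{F}_l$ denote the information generated by the meta-state and all batched histories up to, but excluding, the sampling of $a^{i,b}_l$; note that $\mathcal{F}_l$ includes the current co-player parameters $\phimi_m$, which are frozen throughout inner episode $m_l$. The conditional zero-mean property $\bbE[\nabla_{\phi}\log\pi^i(a^{i,b}_l \mid h^{i,b}_l) \mid \mathcal{F}_l] = 0$ then kills, by the tower rule, every reward term that is $\mathcal{F}_l$-measurable or conditionally independent of $a^{i,b}_l$ given $\mathcal{F}_l$. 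The crux is the structural claim that, within a single inner episode, the $B$ parallel environments evolve independently conditioned on the shared frozen co-player parameters; hence for $b'\neq b$ and $l'\le m_lT$ the reward $r^{i,b'}_{l'}$ is conditionally independent of $a^{i,b}_l$ given $\mathcal{F}_l$ and drops out, while same-trajectory rewards $r^{i,b}_{l'}$ with $l\le l'\le m_lT$ survive. At the boundary $l' = m_lT$, however, the co-players update $\phimi$ using the minibatch aggregated across all $B$ trajectories, so from $l' = m_lT+1$ onward the meta-state—and therefore every trajectory $b'$—depends on $a^{i,b}_l$, and all such rewards survive.

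Combining these observations gives exactly the two-part reward-to-go of Eq.~\ref{eqn:coala-pg}: the intra-episode, same-batch term $\frac{1}{B}\sum_{l'=l}^{m_lT} r^{i,b}_{l'}$ together with the cross-episode, full-batch term $\frac{1}{B}\sum_{b'=1}^B\sum_{l'=m_lT+1}^{MT} r^{i,b'}_{l'}$, the $\frac{1}{B}$ factors being inherited from the return normalization. I expect the main obstacle to be making the conditional-independence step rigorous: one must define the filtration generated by the meta-state and batched actions precisely, verify from the definition of $\bar{P}_t$ that the batched dynamics genuinely factorize across $b$ within an episode while coupling all trajectories through the minibatch update at boundaries, and then justify the removal of each non-downstream term by the conditional zero-mean property. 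Unbiasedness follows for free, since only terms with identically zero expectation are removed and no baseline or truncation approximation is introduced.
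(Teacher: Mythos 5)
Your proposal is correct and follows essentially the same route as the paper's proof: a score-function (log-derivative) expansion of the batched POMDP likelihood, followed by the key causality argument that within an inner episode the $B$ parallel trajectories are conditionally independent given the frozen co-player parameters, so cross-trajectory same-episode rewards are eliminated by the zero-mean property of the score, while post-boundary rewards on all trajectories survive because the minibatched co-player update couples them. The only cosmetic difference is that the paper obtains the restriction to scores at times $l'\le l$ by differentiating the marginal $\bar{P}^{\phi}(h_l)$ directly, whereas you prune the past rewards from the full-trajectory estimator using the same zero-mean argument.
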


We provide a proof in Appendix \ref{apx:proofs}. There are three important differences between \texttt{COALA-PG} and naively applying policy gradient methods to individual trajectories in a batch. (i) Each gradient term for an individual action $a_l^{i,b}$ takes into account the future inner episode returns averaged over the whole minibatch, instead of the future return along trajectory $b$ (see Fig.~\ref{fig:policy_update_visualization}). This allows taking into account the influence of this action on the parameter update of the naive learner, which influences all trajectories in the minibatch after that update. 
(ii) Instead of averaging the policy gradients for each trajectory in the batch, \texttt{COALA-PG} accumulates (sums) them. This is important, as otherwise the learning signal would vanish in the limit of large minibatches. Intuitively, when a naive learner uses a large minibatch for its updates, the effect of a single action on the naive learner's update is small ($O(\frac{1}{B})$), and this must be compensated by summing all such small effects. 
(iii) To ensure a correct balance between the return from the current inner episode $m_l$ and the return from future inner episodes, \texttt{COALA-PG} rescales the current episode return by $\frac{1}{B}$. Figure \ref{fig:gradient_balance} in App. \ref{app:extra_exp_results} shows empirically that \texttt{COALA-PG} correctly balances the policy gradient terms arising from the current inner episode return versus the future inner episode returns, whereas M-FOS \citep{lu_model-free_2022} and a naive policy gradient that ignores the other parallel trajectories over-emphasize the current inner episode return, causing them to loose the co-player shaping learning signals. 
We will later show experimentally in Section~\ref{sec:experiments} that correct treatment of minibatches critically affects reinforcement learning performance.


The expectation appearing in the policy gradient expression must be estimated. To reduce gradient estimation variance, we resort to standard practices, including generalized advantage estimation \citep{schulman2016gae} and 
sampling a meta-batch of $\bar{B}$ batched trajectories from $\bbE_{\bar{P}^{\phi^i}}$ (c.f. Appendix \ref{app:a2c_ppo}). 

\textbf{Relationship to prior shaping methods.} We now contrast our policy gradient algorithm to two closely related methods, M-FOS \citep{lu_model-free_2022} and Shaper \citep{khan_scaling_2024}. Like \texttt{COALA-PG}, M-FOS is a model-free meta reinforcement learning method. Unlike the approached followed here, though, it aims to solve the bilevel co-player shaping problem of Eq.~\ref{eqn:shaping_problem}, treating meta- and inner-policy networks separately. Moreover, the M-FOS parameter update is not derived as the policy gradient on the batched co-player shaping POMDP introduced above, and current-episode returns are overemphasized compared to future-episode returns (see Appendix~\ref{app:baseline_detail}).
This leads to a biased parameter update, which results in learning inefficiencies. We comment on other existing bilevel shaping methods in Appendix~\ref{apx:prior-shapers}. 

\citet{khan_scaling_2024} adopt a single-level sequence policy for their Shaper algorithm, as we do here, but then resort to black-box evolution strategies \citep[][]{rechenberg_evolutionsstrategie_1973} to learn the policy. Obtaining an efficient meta reinforcement learning algorithm from a POMDP applicable to such single-level policies is thus our key distinguishing contribution. The unbiased policy gradient property of our learning rule translates in practice onto learning speed and stability gains, as we will see in the experiments reported in Section~\ref{sec:experiments}.

\section{Why is learning awareness beneficial on general-sum games?} \label{sec:coop_hypothesis}

We have established that co-player shaping can be cast as a single-agent reward maximization problem whenever there is a single learning-aware player amongst a group of learners that are otherwise naive. This allowed us to derive a policy gradient shaping method. However, such an asymmetric setup cannot in general be taken for granted. In our experimental analyses, we therefore consider the more realistic scenario where equally-capable, learning-aware agents try to shape each other.

As reviewed in Section~\ref{sect:background-shaping}, prior work has shown that learning-awareness can result in better outcomes in general-sum games, but the origin and conditions for the occurrence of this phenomenon are not yet well understood. Here, we shed light on this question by analyzing the interactions of agents with varying degrees of learning-awareness in an analytically tractable matrix game setting. This leads us to uncover a novel explanation for the emergence of cooperation in general-sum games.

\subsection{The iterated prisoner's dilemma}
\begin{wraptable}[9]{r}{3cm}
\caption{Single-round IPD rewards $(r^1,r^2)$.}\label{wrap-tab:ipd-reward}
\begin{tabular}{@{}l|ll@{}}
\toprule
  & \textbf{c} & \textbf{d} \\ \toprule
\textbf{c} & (1,1) & (-1,2) \\
\textbf{d} & (2,-1) & (0,0) \\
\bottomrule
\end{tabular}
\end{wraptable}
We focus on the infinitely iterated prisoner's dilemma (IPD), the quintessential model for understanding the challenges of cooperation among self-interested agents \citep{rapoport_prisoners_1974,axelrod_evolution_1981}. The game goes on for an indefinite number of rounds, where for each round of play two players ($i=1,2$) meet and choose between two actions, cooperate or defect, $a_t^i \in \{\mathrm{c}, \mathrm{d}\}$. The rewards collected as a function of the actions of both agents are shown in Table~\ref{wrap-tab:ipd-reward}. These four rewards are set so as to create a social dilemma. When the agents meet only once, mutual defection is the only Nash equilibrium; self-interested agents thus end up obtaining low reward. In the infinitely iterated variant of the game, there exist Nash equilibria involving cooperative behavior, but these are notoriously hard to converge to through self-interested reward maximization.

We model each agent through a tabular policy $\pi^i(a_t^i \mid x_t^i; \phi^i)$ that depends only on the previous action of both agents, $x_t^i=(a^1_{t-1},a^2_{t-1})$. Their behavior is thus fully specified by five parameters, which determine the probability of cooperating in response to the four possible previous action combinations together with the initial cooperation probability. For this game, the discounted expected return $J^i(\phi^1, \phi^2)$ can be calculated analytically. We exploit this property and optimize policies by performing exact gradient ascent on the expected return (c.f. Appendix \ref{app:ipd_analytic} for details). 

\subsection{Explaining cooperation through learning awareness}
\vspace{-2mm}
Based on the experimental results reported in Fig.~\ref{fig:analytical-IPD}, we now identify three key findings that establish how learning awareness enables cooperation to be reached in the iterated prisoner's dilemma:

\textbf{Finding 1: Learning-aware agents extort naive learners.} We first pit naive against learning-aware agents. We find that the latter develop extortion policies which force naive learners onto unfair cooperation, similar to the zero-determinant extortion strategies discovered by \citet{press_iterated_2012} (c.f. Appendix \ref{app:aipd_extra_results}). Even when a learning-aware agent is initialized at pure defection, maximizing the shaping objective of Eq.~\ref{eqn:flat_shaping_problem} lets it escape mutual defection (see Fig.~\ref{fig:analytical-IPD}A).
    
\textbf{Finding 2: Extortion turns into cooperation when two learning-aware players face each other.} After developing extortion policies against naive learners (grey shaded area in Fig.~\ref{fig:analytical-IPD}B), we then let two learning-aware agents (C1 and C2 in Fig.~\ref{fig:analytical-IPD}B) play against each other after. We see that optimizing the co-player shaping objective turns extortion policies into cooperative policies. Intuitively, under independent learning, an extortion policy shapes the co-player to cooperate more. We remark that the same occurs if learning-aware agents play against themselves (self-play; data not shown). This analysis explains the success of the annealing procedure employed by \citet{lu_model-free_2022}, according to which naive co-players transition to self-play throughout training.
    
\textbf{Finding 3: Cooperation emerges within groups of naive and learning-aware agents.} Findings 1.~and 2.~motivate studying learning in a group containing both naive and learning-aware agents, with every agent in the group trained against each other. This mixed group setting yields a sum of two distinct shaping objectives, which depend on whether the agent being shaped is learning-aware or naive. The gradients resulting from playing against naive learners pull against mutual defection and towards extortion, while those resulting from playing against other learning-aware agents push away from extortion towards cooperation. Balancing these competing forces leads to robust cooperation, see Fig.~\ref{fig:analytical-IPD}C (left). Intriguingly, mutual unconditional defection is no longer a Nash equilibrium in this mixed group setting, and agents initialized with unconditional defection policies learn to cooperate (see Appendix \ref{app:aipd_extra_results}). By contrast, a pure group of learning-aware agents cannot escape mutual defection, see Fig.~\ref{fig:analytical-IPD}C (right). This can be explained by the fact that the agents can no longer observe others learn, and must deal again with a non-stationary problem. The resulting gradients do not therefore contain information on the effects of unconditional defection on the future strategies of co-players, or that policies in the vein of tit-for-tat can shape  co-players towards more cooperation.

Our analysis thus reveals a surprising path to cooperation through heterogeneity. The presence of short-sighted agents that greedily maximize immediate rewards turns out to be essential for full cooperation to be established among far-sighted, learning-aware agents.

\begin{figure}
    \centering
    \includegraphics[width=1.0\linewidth]{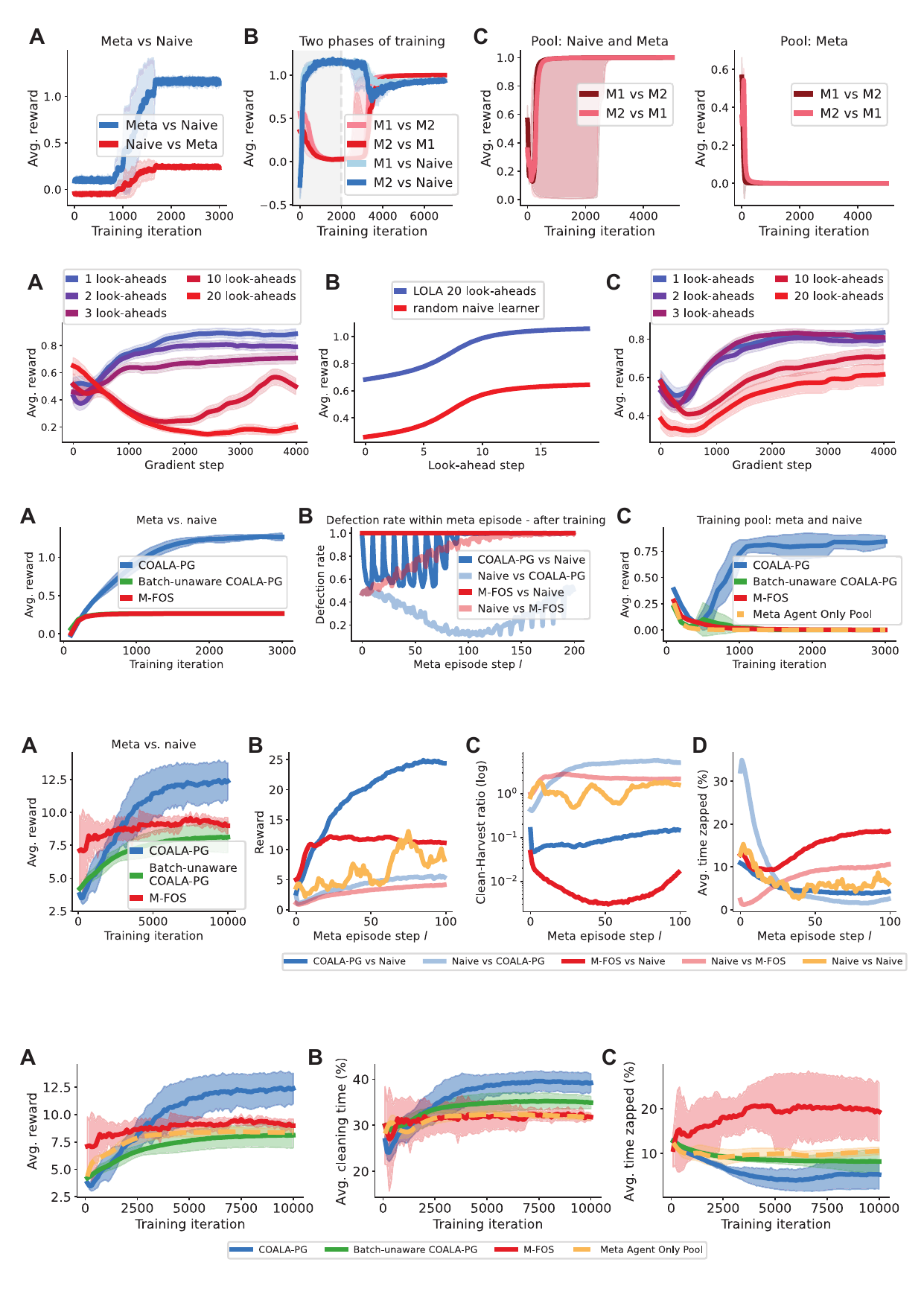}
    \vspace{-5mm}
    \caption{(A) Learning-aware agents learn to extort naive learners, even when initialized with pure defection strategy. (B) An extortion policy developed against naive agents (shaded area period) turns into a cooperative one when playing against another learning-aware agent (M1 \& M2). (C) Cooperation emerges within mixed training pools of naive and learning-aware agents, but not in pools of learning-aware agents only. The shaded regions represent the interquartile range (25th to 75th quantiles) across 32 random seeds}
    \vspace{-1\baselineskip}
    \label{fig:analytical-IPD}
\end{figure}

\vspace{-2mm}

\subsection{Explaining when and how cooperation arises with the LOLA algorithm}
\label{sec:lola}
We next analyze the seminal Learning with Opponent-Learning Awareness \citep[LOLA;][]{foerster_learning_2018} algorithm. Briefly, 
LOLA assumes that co-players update their parameters with $M$ naive gradient steps, and estimates the total gradient through a look-ahead update:
\small
\begin{align}\label{eqn:lola-dice}
    \nabla_{\phi^i}^{\mathrm{LOLA}} = \frac{\mathrm{d}}{\mathrm{d}\phi^i} \left[J^i\left(\phi^i, \phi^{-i} + \sum_{q=1}^M \Delta_q \phi^{-i}\right) ~~~ \text{s.t.} ~~ \Delta_q\phi^{-i} = \alpha \frac{\partial}{\partial \phimi}J^i\left(\phi^i, \phi^{-i} + \sum_{q'=1}^{q-1} \Delta_{q'} \phi^{-i}\right)  \right]
\end{align}
\normalsize
with $\frac{\mathrm{d}}{\mathrm{d}\phi^i}$ the total derivative taking into account the effect of $\phi^i$ on the parameter updates $\Delta_q \phimi$, and $\frac{\partial}{\partial \phimi}$ the partial derivative. Note that Eq.~\ref{eqn:lola-dice} considers the LOLA-DICE update \citep{foerster_dice_2018}, an improved version of LOLA. In Appendix~\ref{app:lola-review}, we show that Eq.~\ref{eqn:lola-dice} can be derived as a special case of COALA-PG. Note that LOLA-DICE estimates the policy gradient in Eq.~\ref{eqn:lola-dice} by explicitly backpropagating through the co-player's parameter update using higher-order derivatives, whereas COALA-PG leads to a novel higher-order-derivative-free estimator of Eq.~\ref{eqn:lola-dice} (see Appendix~\ref{app:lola-review}).

Above, we showed that the two main ingredients for learning to cooperate under selfish objectives are (i) observe that one's actions influence the future behavior of others, providing shaping gradients pulling away from defection towards extortion, and (ii), also play against other extortion agents immune to being shaped on the fast timescale, providing gradients pulling away from extortion towards cooperation. We then showed that both ingredients can be combined by training agents in a heterogeneous group containing both naive and learning-aware agents. 

\begin{figure}[htbp!]
    \centering
    \includegraphics[width=1.0\linewidth]{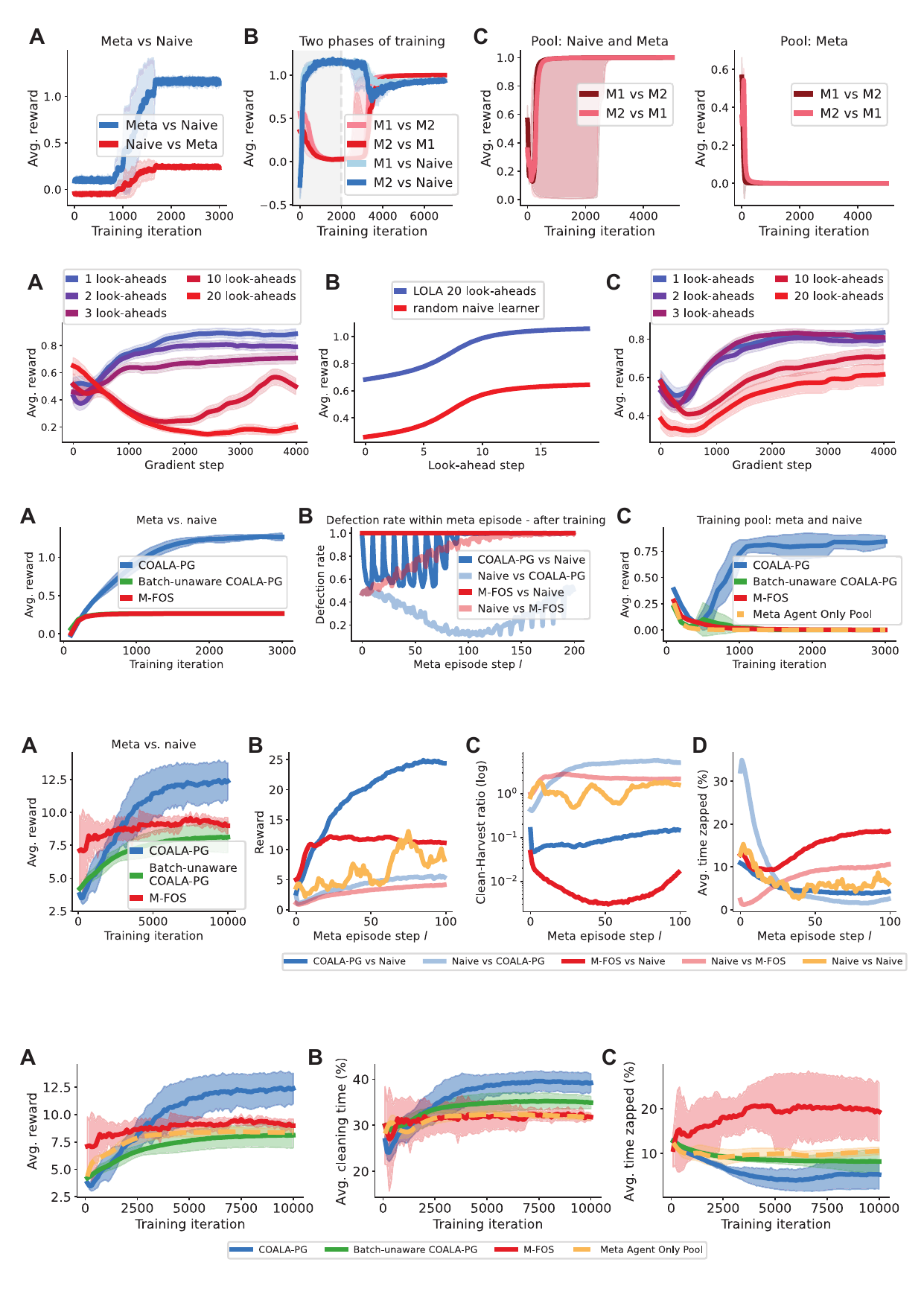}
    \caption{(A) Performance of two agents trained by LOLA-DICE on the iterated prisoner's dilemma with analytical gradients for various look-ahead steps (only the performance of the first agent is shown). (B) Performance of a randomly initialized naive learner trained against the fixed LOLA 20 look-aheads policy taken from the end of training of (A). (C) Same setting as (A), but with the naive gradient $\lambda \frac{\partial}{\partial \phimi}J^i(\phi, \phimi)$ added to the LOLA-DICE update, with $\lambda$ a hyperparameter (c.f. Appendix \ref{app:ipd_analytic}). Shaded regions indicate standard error computed over 64 seeds.}
    \label{fig:lola_ipd}
     \vspace{-1\baselineskip}
\end{figure}

We can explain the emergent cooperation in LOLA by observing that LOLA also combines both ingredients, albeit differently from the heterogeneous group setting. The look-ahead rule (Eq.~\ref{eqn:lola-dice}) computes gradients that shape naive learners performing $M$ naive gradient steps. Unique to LOLA however, these simulated naive learners are initialized with the parameters $\phimi$ of other LOLA agents. If the number of look-ahead steps is small, the updated naive learner parameters stay close to $\phimi$, mimicking playing against other extortion agents. This then results in emergent cooperation. 

Fig.~\ref{fig:lola_ipd}A confirms that LOLA-DICE with ground-truth gradients and with few look-ahead steps leads to cooperation on the iterated prisoner's dilemma. However, as the number of look-ahead steps increases, the naive learner starts moving too far away from its $\phimi$ initialization, removing the second ingredient, thus leading to defection. In Fig.~\ref{fig:lola_ipd}, we take the policy resulting from LOLA-training with many look-ahead steps, and train a new randomly initialized naive learner against this fixed LOLA policy. The results show that the LOLA policy extorts the naive learner into unfair cooperation, confirming that with many look-ahead steps, only a shaping incentive is present in the LOLA update, resulting in extortion policies. Hence, the low reward in Fig.~\ref{fig:lola_ipd}A for LOLA agents with many look-ahead steps does not result from unconditional defection, but instead from both LOLA policies trying to extort the other one. Finally, we can improve the performance of LOLA with many look-ahead steps by explicitly introducing ingredient 2 by adding the partial gradient $\frac{\partial}{\partial \phimi}J^i(\phi, \phimi)$ to Eq.~\ref{eqn:lola-dice}, see Fig.~\ref{fig:lola_ipd}C.

\vspace{-2mm}

\section{Experimental analysis of policy gradient implementations}
\label{sec:experiments}
\vspace{-2mm}

The results presented in the previous sections were obtained by performing gradient ascent on analytical expected returns. This assumes knowledge of co-player parameters, and it is only possible on a restricted number of games which admit closed-form value functions. We now move to the general reinforcement learning setting, aiming at understanding (i) when meta-agents succeed in exploiting naive agents, and (ii) when cooperation is achieved among meta-agents. 

\subsection{Agents trained with \texttt{COALA-PG} master the iterated prisoner's dilemma}
\begin{figure}[htbp!]
    \includegraphics[width=1.0\linewidth]{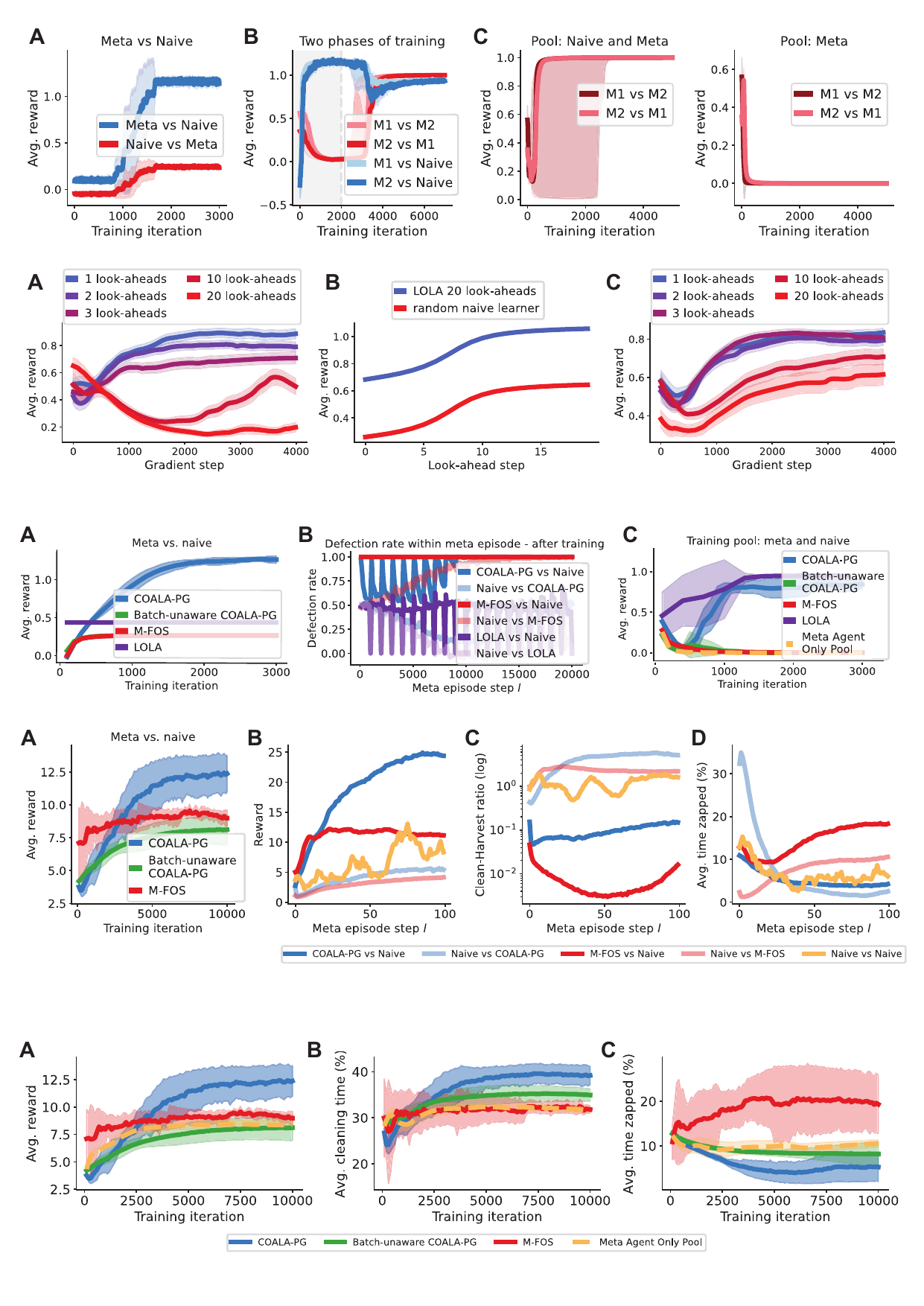}
  \caption{\textbf{Agents trained by \texttt{COALA-PG} play iterated prisoner's dilemma}. (A): When trained against naive agents only, \texttt{COALA-PG}-trained agents extort the latter and reach considerably higher reward than other baseline agents. The stars ($\star$) indicate overlapping curves of the corresponding color at that point (B):  When analyzing the behavior of the agents within one meta-episode, we observe \texttt{COALA-PG}-trained agents shaping naive co-players, leading to low defection rate in the beginning, which is then exploited towards the end. M-FOS on the other hand defects from the beginning, achieving lower reward, thus failing to properly optimize the shaping problem. Batch-unaware \texttt{COALA-PG} performs identically to M-FOS and is therefore omitted.
  (C): Average performance of meta agents playing against other meta agents, when training a group of meta agents against a mixture of naive and other meta agents. Such agents trained with \texttt{COALA-PG} cooperate when playing against each other, but fail to do so when trained with baseline methods. When removing naive agents from the pool, meta agents also fail to cooperate, as predicted in Section~\ref{fig:analytical-IPD}. Shaded regions indicate standard deviation computed over 5 seeds.}
    \label{fig:PG-IPD}
    \vspace{-\baselineskip}
\end{figure}
We train a long-context sequence policy $\pi^i(a^{i,b}_l \mid h_l^{i,b}; \phi^i)$ with the \texttt{COALA-PG} rule to play the (finite) iterated prisoner's dilemma, see Appendix~\ref{app:experiment_detail}. We choose a Hawk recurrent neural network as the policy backbone \citep{de_griffin_2024}. Hawk models achieve transformer-level performance at scale, but with time and memory costs that grow only linearly with sequence length. This allows processing efficiently the long history  context $h_l^{i,b}$, which contains all actions played by the agents across episodes. Based on the results of the preceding section, we consider a mixed group setting, pitting \texttt{COALA-PG}-trained agents against naive learners as well as other equally capable learning-aware agents. Naive learners are equipped with the same policy architecture as the agents trained by \texttt{COALA-PG}, but their context is limited to the current inner game history $x_t^{i,b}$.

In Fig.~\ref{fig:PG-IPD}, we see that \texttt{COALA-PG} reproduces the analytical game findings reported in the previous section: learning-aware agents cooperate with other  learning-aware agents, and extort naive learners. Importantly, the identity of each agent is not revealed to a learning-aware agent, which must therefore infer in-context the strategy used by the player it faces. Likewise, we find that the LOLA-DICE (Eq.~\ref{eqn:lola-dice}) estimator behaves as in the analytical game. This result complements previous experiments with LOLA on tabular policies \citep{foerster_learning_2018,foerster_dice_2018}, suggesting that there is a broad class of efficient learning-aware reinforcement learning rules that can reach cooperation with more complex context-dependent sequence models. We note that LOLA achieves this by explicitly differentiating through co-player updates, which requires access to their parameters and gives rise to higher-order derivatives. Our rule lifts these requirements, while maintaining learning efficiency.

By contrast, the whole group falls into defection when training the exact same sequence model with the M-FOS rule, which weighs disproportionately future vs.~current episode returns. We note that the experiments reported by \citet{lu_model-free_2022} were performed with a tabular policy and analytical inner game returns, for which cooperation could be achieved with the M-FOS rule. This shows how crucial the unbiased policy gradient property of \texttt{COALA-PG} is for co-player shaping by meta reinforcement learning to succeed in practice. The same failure to beat defection occurs when using a naive policy gradient ablation which does not take co-player batching into account. We refer to Appendix~\ref{app:baseline_detail} for expressions for this baseline as well as the M-FOS rule. When training against M-FOS agents, our \texttt{COALA-PG} agents successfully shape M-FOS agents into cooperative behavior (c.f. Appendix \ref{app:round-robin}). 

\vspace{-2mm}
\subsection{Agents trained with \texttt{COALA-PG} cooperate on a sequential social dilemma}
\vspace{-3mm}
\begin{figure}[htbp!]
\centering
\includegraphics[width=1.0\linewidth]{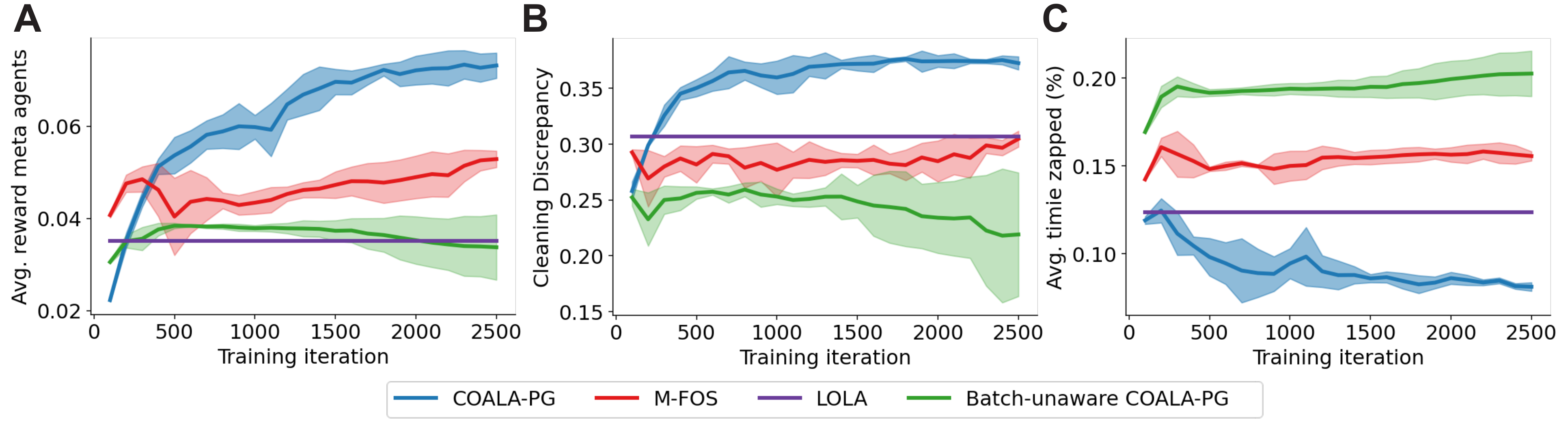}
\vspace{-2mm}
  \caption{\textbf{Agents trained by \texttt{COALA-PG} against naive agents only successfully shape them in \texttt{CleanUp-lite}}. (A) \texttt{COALA-PG}-trained agents better shape naive opponents compared to baselines, obtaining higher return. (B and C)  Analyzing behavior within a single meta-episode after training reveals that COALA outperforms baselines and shapes naive agents, (i) exhibiting a lower cleaning discrepancy (absolute difference in average cleaning time between the two agents), and (ii) being less often zapped. Shaded regions indicate standard deviation computed over 5 seeds.}
    \label{fig:PG-CLEANUP}
    \vspace{-1\baselineskip}
\end{figure}

Finally, we consider \texttt{CleanUp-lite}, a simplified two-player version of the \texttt{CleanUp} game, which is part of the Melting Pot suite of multi-agent environments \citep{agapiou_melting_2023}. We briefly describe the game here, and provide additional details on Appendix~\ref{app:experiment_detail}. On a high level, \texttt{CleanUp-lite} is a two-player game that models the social dilemma known as the tragedy of the commons \citep{hardin_tragedy_1968}. A player receives rewards by picking up apples. Apples are spontaneously generated in an orchard, but the rate of generation is inversely proportional to the pollution level of a nearby river. Agents can spend time cleaning the river to reduce the pollution level, and thus increase the apple generation rate. In a single-player game, an agent would balance out cleaning and harvesting to maximize the return. In a multi-player setting however, this gives room for a ``freerider" who never cleans and always harvests, letting the opponent clean instead. At any time point, agents can ``zap" the opponent, which would result in the opponent being frozen for a number of time steps, unable to harvest or clean. In contrast to matrix games, this game is a sequential social dilemma \citep{leibo_multi-agent_2017}, where cooperation involves orchestrating multiple actions.
\vspace{-1mm}

As in the previous section, we model agent behavior through Hawk sequence policies, and compare \texttt{COALA-PG} to the same baseline methods as before. Naive agents here would learn too slowly if initialized from scratch, and are therefore handled differently, see Appendix \ref{app:wrapping_detail}. In Figs.~\ref{fig:PG-CLEANUP}~and~\ref{fig:PG-CLEANUP-COOP}, we see that agents trained by COALA-PG reach significantly higher returns than previous model-free baselines, establishing a mutual cooperation protocol with other learning-aware agents, while exploiting naive ones. 
We further describe below the qualitative behavior found in the simulations.

\textbf{Exploitation of naive agents.} \texttt{COALA-PG}-trained agents shape the behavior of naive ones to their advantage (c.f. Figure \ref{fig:PG-CLEANUP}). Our behavioral analysis reveals two salient features. First, \texttt{COALA-PG} successfully shapes naive opponents to zap less often throughout the meta-episode. Less overall zapping means that agents can harvest more apples while the pollution level is low, thus increasing the overall reward. Second, \texttt{COALA-PG} successfully shapes naive co-players to clean significantly more often compared to the \texttt{COALA-PG} agent, resulting in a lower average pollution level and a higher average apple level (c.f. Figure \ref{fig_app:cleanup-shaping} in App. \ref{app:extra_exp_results}). Interestingly, the naive learners benefit from the shaping from \texttt{COALA-PG} agents, reaching a higher average reward compared to playing against other baselines (c.f. Figure \ref{fig_app:cleanup-shaping}).


\begin{figure}[htbp!]
    \includegraphics[width=1.0\linewidth]{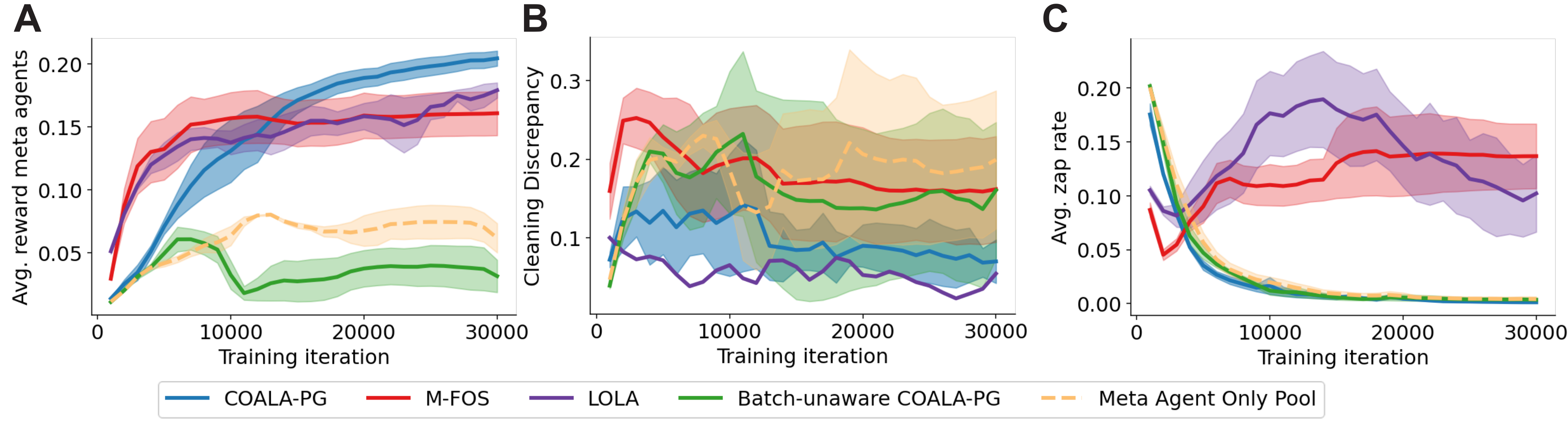}
    
  \caption{\textbf{Agents trained with \texttt{COALA-PG} against a mixture of naive and other meta agents learn to cooperate in \texttt{CleanUp-lite}}. (A) \texttt{COALA-PG}-trained agents obtain higher average reward than baseline agents when playing against each other. (B and C): \texttt{COALA-PG} leads to a more fair division of cleaning efforts and lower zapping rates. Shaded regions indicate standard deviation computed over 5 seeds.}
    \label{fig:PG-CLEANUP-COOP}
    \vspace{-0.3cm}
\end{figure}

\textbf{Learning-aware agents cooperate.} We see similar trends when introducing other \texttt{COALA-PG}-trained agents in the game, see Fig.~\ref{fig:PG-CLEANUP-COOP}. Essentially, \texttt{COALA-PG} allows for higher apple production because of lower pollution, and lower zapping rate. We see that over training time the zapping rate goes down, and \texttt{COALA-PG} agents have a fairer division of cleaning time compared to baselines. Interestingly, the zapping rates averaged over the meta-episode are lower than in the pure shaping setting (i.e., with naive co-players only), indicating that learning-aware agents mutually shape each other to zap less.

\vspace{-2mm}
\section{Conclusion}
\vspace{-2mm}
We have shown that learning awareness allows reaching high returns in challenging social dilemmas, designed to make independent learning difficult. We identified two key conditions for this to occur. First, we found it necessary to take into account the stochastic minibatched nature of the updates used by other agents. This is one distinguishing aspect of the COALA-PG learning rule proposed here, which translates into a significant performance advantage over prior methods. Second, learning-aware agents had to be embedded in a heterogeneous group containing non-learning-aware agents. 

An important component of our result is the ability to leverage modern and scalable sequence models. Modern sequence models have scaled favorably and in a predictable manner, most notably in autoregressive language modeling \citep{kaplan_scaling_2020}, and our results suggest important gains could be made applying similar approaches to multi-agent learning. 
Our method shares key aspects with the current scalable machine learning approach: unbiased stochastic gradients, sequence model architectures that are amenable to gradient-based learning, and in-context learning/inference. Moreover, we focused on the setting of independent agent learning, which scales well in parallel by design. We thus see it as an exciting question to investigate the approach pursued here at larger scale and in a wider range of environments. The resulting self-organized behavior may display unique social properties that are absent from single-agent machine learning paradigms, and which may open new avenues towards artificial intelligence \citep{duenez-guzman_social_2023}.

\section*{Acknowledgements}
We would like to thank Maximilian Schlegel, Yanick Schimpf, Rif A. Saurous, Joel Leibo,  Alexander Sasha Vezhnevets, Aaron Courville, Juan Duque, Milad Aghajohari, Razvan Ciuca, Gauthier Gidel, James Evans and the Google Paradigms of Intelligence team for feedback and enlightening discussions. GL and BR acknowledge support from the CIFAR chair program. EE acknowledges support from a Vanier scholarship from the government of Canada.

\bibliography{MARLAX}

\appendix
\clearpage
\section{A2C and PPO implementations of COALA-PG}\label{app:a2c_ppo}
We use both Advantage Actor-Critic (A2C) \citep{mnih_asynchronous_2016} and Proximal Policy Optimization (PPO) \cite{schulman_proximal_2017} for our COALA policy gradient estimate.
We detail here how to merge these methods with our COALA-PG method. 

\subsection{REINFORCE estimator}
For the reader's convenience, we display the COALA policy gradient below. We remind for reference, that $m_l$ the inner episode index corresponding to the meta episode time step $l$. 

\begin{align} \label{eq:simplified_coala_pg}
    \nabla_{\phi^i} \bar{J}(\phi^i) &=  \expect{\bar{P}^{\phi^i}}{\sum_{b=1}^B \sum_{l=1}^{MT} \nabla_{\phi^i} \log \pi^i(a^{i,b}_l \mid h^{i,b}_l) \left(\frac{1}{B}\sum_{k=l}^{T m_l} R_k^{i,b} + \frac{1}{B} \sum_{b'=1}^B \sum_{k=T m_l+1}^{MT} R_k^{i,b'}\right)}.
\end{align}

The batch-unaware COALA policy gradient, which we use as a baseline method for shaping naive learners, is given by

\begin{align} \label{eq:standard_pg}
    \nabla_{\phi^i} \bar{J}(\phi^i) &= \expect{ \bar{P}^{\phi^i}}{\frac{1}{B}\sum_{b=1}^B\sum_{l=1}^{MT} \nabla_{\phi^i} \log \pi^i(a^{i,b}_l \mid h^{i,b}_l) \left(\sum_{k=l}^{T m_l} R_k^{i,b} + \sum_{k=T m_l+1}^{MT} R_k^{i,b}\right)}.
\end{align}
Note that when we play against other meta agents instead of naive learners, all parallel POMDP trajectories in the batch are independent, and hence we can correctly use the batch-unaware COALA policy gradient for this setting. 

Finally, the M-FOS policy gradient (c.f. Appendix \ref{app:baseline_detail}) is given by 
\begin{align} \label{eqn:mfos}
    \nabla_{\phi^i} \bar{J}(\phi^i) &=  \expect{\bar{P}^{\phi^i}}{\sum_{b=1}^B \sum_{l=1}^{MT} \nabla_{\phi^i} \log \pi^i(a^{i,b}_l \mid h^{i,b}_l) \left(\sum_{k=l}^{T m_l} R_k^{i,b} + \frac{1}{B} \sum_{b'=1}^B \sum_{k=T m_l+1}^{MT} R_k^{i,b'}\right)}.
\end{align}
The difference between M-FOS and COALA-PG is the $\frac{1}{B}$ scaling factor for the current inner episode return. This scaling factor is crucial for a correct balance between gradient contributions arising from the current inner episode, and future inner episodes. Without this scaling factor, the contributions from future inner episodes required for learning to shape the co-players vanish for large inner batch sizes.

We can construct REINFORCE estimators by sampling directly from the above expectations. However, this leads to policy gradients with prohibitively large variance. Hence, in the following sections we will derive improved advantage estimators to reduce the variance of the policy gradient estimates.

\subsection{Value function estimation}
One of the easiest ways to use value functions for reducing the variance in the policy gradient estimator, is to subtract a baseline from the return estimator. In the COALA-PG \eqref{eq:simplified_coala_pg}, the straightforward value function to learn is

\begin{align} 
     V(h_l^{i,b}) &=  \expect{\bar{P}^{\phi^i}(\cdot \mid h_l^{i,b})}{ \left(\frac{1}{B}\sum_{k=l}^{T m_l} R_k^{i,b} + \frac{1}{B} \sum_{b'=1}^B \sum_{k=T m_l+1}^{MT} R_k^{i,b'}\right)}.
\end{align}

As the environment is reset after each inner episode, the second term can be simplified by merging expectations over the different parallel trajectories:
\begin{align}
    \expect{\bar{P}^{\phi^i}(\cdot \mid h_l^{i,b})}{ \left(\sum_{k=l}^{T m_l} \frac{1}{B}R_k^{i,b} + \sum_{k=T m_l+1}^{MT} R_k^{i,b'}\right)}.
\end{align}

Which has an additional $\frac{1}{B}$ factor on the left term compared to a conventional value function that would need to be learnt when playing e.g. against another meta agent \eqref{eq:standard_pg}. This is undesirable for several reasons, one of which being that as $B$ increases, the value target becomes increasingly insensitive to the inner episode return, which makes learning difficult. Another reason is that the target magnitude between when playing against a naive agent or a meta agent can significantly differ. Finally, for simplicity reasons, we want a value function that we can use both when playing against naive learners, as well as other meta agents.

We can solve these issues by instead learning the value function for the batch-unaware returns, and introducing some specialized reweighing when playing against naive learners, which we will see later.

\begin{align} 
     \hat{V}(h_l^{i,b}) &=  \expect{\bar{P}^{\phi^i}(\cdot \mid h_l^{i,b})}{ \left(\sum_{k=l}^{T m_l} R_k^{i,b} + \sum_{k=T m_l+1}^{MT} R_k^{i,b}\right)}.
\end{align}

As such, the same value function can be used for both when playing against a naive or a meta agent. In practice, we trade off variance and bias for learning the value function by using TD($\lambda$) targets. Algorithm \ref{algo:lambda_returns} shows how to compute such targets with a general algorithm, which we later can also repurpose for Generalized Advantage Estimation \citep{schulman2016gae} and M-FOS value functions. For computing the TD($\lambda$) targets for learning our value functions, we use \texttt{normalize\_current\_episode=False} and \texttt{average\_future\_episodes=False} when the given trajectory batch originates from playing against another meta agent. 

\begin{algorithm}[h]
\DontPrintSemicolon
\KwIn{$r_t$, $discount$, $v_t$, $\lambda$, $average\_future\_episodes$, $normalize\_current\_episode$, $inner\_episode\_length$}
\KwOut{returns}
$seq\_len \gets r_t.shape[1]$ \\
$batch\_size \gets r_t.shape[0]$ \\
\eIf{normalize\_current\_episode}{$normalization \gets batch\_size$}{$normalization \gets 1$}
$episode\_end \gets (\mathbf{range}(seq\_len) \mod inner\_episode\_length) == (inner\_episode\_length - 1)$ \\
$acc \gets v_t[:,-1]$ \\
$global\_acc \gets mean(v_t[:,-1])$ \\
\For{$t = seq\_len - 1$ \KwTo $0$}{
    \If{$average\_future\_episodes$ \text{\textbf{and}} $episode\_end[t]$}{
        $acc \gets global\_acc$ \\
    }
    $acc \gets r_t[:, t] / normalization + discount \times ((1 - \lambda) \times v_t[:, t] + \lambda \times acc)$ \\
    $global\_acc \gets mean(r_t[:, t] + discount \times ((1 - \lambda) \times v_t[:, t] + \lambda \times global\_acc))$ \\
    $returns[:, t] \gets acc$ \\
}
\Return{returns}
\caption{Batch Lambda Returns}
\label{algo:lambda_returns}
\end{algorithm}

\subsection{Generalized Advantage Estimation}
We now see how the above value estimation can be used to update the policy following \texttt{COALA-PG}. Ultimately we want an unbiased estimate of the advantage function, as this allows the usage of algorithms like PPO or A2C. 

The advantage of a state $h_l^{i,b}$ and action $a_l^{i, b}$ against a naive agent is

\begin{align} 
     A(h_l^{i,b}, a_l^{i, b}) =&  \expect{\bar{P}^{\phi^i}(\cdot \mid h_l^{i,b}, a_l^{i, b})}{ \frac{1}{B} \sum_{k=l}^{T m_l} R_k^{i,b} + \frac{1}{B} \sum_{b'=1}^B\sum_{k=T m_l+1}^{MT} R_k^{i,b'}} \\
     &-\expect{\bar{P}^{\phi^i}(\cdot \mid h_l^{i,b})}{ \frac{1}{B} \sum_{k=l}^{T m_l} R_k^{i,b} + \frac{1}{B} \sum_{b'=1}^B\sum_{k=T m_l+1}^{MT} R_k^{i,b'}}.
\end{align}

We can reformulate the expression using $\hat{V}$ as follows:

\begin{align} 
     A(h_l^{i,b}, a_l^{i, b}) =&  \expect{\bar{P}^{\phi^i}(\cdot \mid h_l^{i,b}, a_l^{i, b})}{ \frac{1}{B} \sum_{k=l}^{T m_l} R_k^{i,b} + \frac{1}{B} \sum_{b'=1}^B\sum_{k=T m_l+1}^{MT} R_k^{i,b'}} \\
     &-\frac{1}{B}\hat{V}(h_l^{i,b}) - \frac{1}{B} \expect{\bar{P}^{\phi^i}(\cdot \mid h_l^{i,b})}{\sum_{b'\neq b}\hat{V}(h_{Tm_l+1}^{i,b'})}.
\end{align}

A simple advantage estimator would be the Monte-Carlo estimate of the above. However, we can trade-off variance with bias by using the Generalized Advantage Estimator \citep{schulman2016gae}. Using similar logic as for the equation above, we can compute the COALA version of the GAE by reusing the \texttt{batched\_lambda\_returns} algorithm (c.f. Algorithm \ref{algo:lambda_returns} as follows: 
\begin{itemize}
    \item Instead of the rewards of the trajectory, we provide the TD errors $\delta_t = r_t + \gamma \hat{V}_{t+1} -\hat{V}_{t}$ as input for \texttt{r\_t}.
    \item We provide $\gamma \lambda$ as input for \texttt{discount}
    \item We provide $1.0$ as input for $\lambda$
    \item We put \texttt{average\_future\_episodes} and \texttt{normalize\_current\_episode} both on \texttt{True}. 
\end{itemize}

For computing the GAE for the batch-unaware COALA-PG baseline, we follow the same approach except putting \texttt{average\_future\_episodes} and \texttt{normalize\_current\_episode} both on \texttt{False}. For computing the GAE for the M-FOS baseline (c.f. \ref{app:baseline_detail}), we follow the same approach except putting \texttt{average\_future\_episodes} on \texttt{True} and \texttt{normalize\_current\_episode} on \texttt{False}.

\subsection{A2C and PPO implementations}
We can now use the above advantage estimates directly into A2C and PPO implementations. Below, we list a few tweaks of classical reinforcement learning tricks that we used in our implementation.
\begin{itemize}
    \item Advantage normalization: as is common with PPO implementations, we investigate the use of advantage normalization. Given a batched trajectory of advantage estimation over which the policy should be updated, the trick consists in centering the advantage estimates over the batched trajectory. Empirically, we found out that when playing against a mixture of naive and meta learners, it was beneficial to apply the centering separately for the 2 types of meta-trajectories (playing against naive learners or playing against other meta agents).
    \item Reward rescaling: as another way to prevent issues stemming from large value target, we investigate simply rescaling the reward of an environment when appropriate. Effectively, the reward is rescaled for the value and policy gradient computation, but all metrics are reported by reverting the scaling, i.e. reported in the original reward scale.
\end{itemize}

\section{Experimental details}\label{app:experiment_detail}

\subsection{Environments}

\subsubsection{Iterated prisoner's dilemma (IPD)}

We model the IPD environment as follows:

\begin{itemize}
    \item \textbf{State:} The environment has $5$ states, that we label by $s_0, (c,c), (c,d), (d,c), (d,d)$.
    \item \textbf{Action:} Each agent has 2 possible actions: cooperate ($c$) and defect ($d$).
    \item \textbf{Dynamics:} Based on the action taken by each agent in the previous time step, the state of the environment is set to the states $(a_1, a_2)$ where $a_1, a_2$ are respectively the previous action of the first and second player in the environment. The assignment of who is first and second is made arbitrarily and fixed.
    \item \textbf{Initial state:} The initial state is always set to $s_0$.
    \item \textbf{Observation:} The agents observe directly the state, modulo a permutation of the tuple to ensure a symmetry of observation. The 5 possible observations are then encoded as one-hot encoding.
    \item \textbf{Reward:} At every timestep, each agents receive a reward following the reward matrix in Table \ref{wrap-tab:ipd-reward}
\end{itemize}

\subsubsection{CleanUp-lite} 
\texttt{CleanUp-lite} is a simplified two-player version of the \texttt{CleanUp} game, which is part of the Melting Pot suite of multi-agent environments \citep{agapiou_melting_2023}. It is modelled as follows: 

\begin{itemize}
    \item \textbf{State:} The world is a 2D grid of size $5 \times 4$. The right column is the river, and the left one the orchard. Cells in the river column can be occupied by dirt. Cells in the orchard column can be occupied by an apple. The world state also contains the position of each agent, and their respective zapped state
    \item \textbf{Action:} There exists 6 actions: $\{$move right, move left, move up, moved down, zap, do nothing$\}$. 
    \item \textbf{Dynamics:} the environment evolves at every timestep in the following order: 
    \begin{enumerate}
        \item When there is at least one cell in the river column that is not occupied by dirt, a new patch of dirt is spawned with probability $p_\text{pollution}=0.35$, and placed randomly on one of the free cells in the river column.
        \item  When there is at least one cell in the orchard column that is not occupied by dirt, a new apple is spawned with probability $p_\text{apple}=1-\min(1, P/P_\text{threshold})$, where $P_\text{threshold}=3$ and $P$ the total number of dirt cells in the environment. The spawned apple is placed randomly on one of the free cells in the orchard column.
        \item When an agent that is not zapped visits a cell with an apple, it harvests the apple and gets a reward of 1. The apple is replaced by an empty cell 
        \item When an agent that is not zapped visits a cell with a dirt patch, it cleans the dirt patch and replaces it by an empty cell. 
        \item Finally, an agent zapping has a $p_\text{zap}=0.9$ probability of successfully zapping the co-player, if the co-player is maximally 2 cells away from the agent. If the zapping is successful, the opponent is frozen for $t_\text{zap}=5$ timesteps, during which it is frozen and cannot be further zapped.
        \item Agents can move around with the $\{$move right, move left, move up, moved down$\}$ actions.
    \end{enumerate}
    \item \textbf{Initial state:} Agents are randomly placed on the grid, unzapped, there are no apples at initialization and 3 dirt patches randomly placed in the river column.
    \item \textbf{Observation:} the observation contains full information about the environment. Each agent sees the position of each agent encoded as flattened one-hot grid indicating the position in the grid, the full grid as a flattened grid with one-hot objects (apple, dirt, empty), and the state of all agents (zapped or non-zapped). The observation is symmetric.
    \item \textbf{Reward:} An agent that picks up an apple receives a reward of $r_\text{apple}=1$ in that timestep. 
\end{itemize}

\subsection{Training details}

Here, we describe the procedure that we use in our experiments to train meta agents in an arbitrary mixture of naive and other meta agents (who themselves are learning). A single parameter, $p_\text{naive}$, indicating the probability of encountering a naive agent, controls the heterogeneity of the pool that a meta agents trains against. If $p_\text{naive}=1$, the meta agents are trained only against naive opponents, and thus the training corresponds to a pure shaping setting. If $p_\text{naive}=0$, meta agents are only trained against other meta agents. 

Given a set of meta agent parameters $\{\phi_i\}$, and a set of naive agent parameters $\{\psi_i\}$, a training iteration updates each parameters as follows.

\subsubsection{Meta agents}

The meta-agent parameters are updated simultaneously. For each parameter  $\phi_i$, the following update is applied:

\begin{enumerate}
    \item First, a meta batch of opponents is sampled. Each opponent is hierarchically sampled by first determining whether it is a naive opponent (with probability $p_\text{naive}$), and then sampling uniformly from $\{\phi_i\}$ or $\{\psi_i\}$ accordingly. The sampling is done with replacement, and disallowing sampling of oneself.
    \item For each opponent, generate a batch of $B$ trajectories of length $TM$, where $M$ is the number of inner episode, and $T$ the episode length of the environment. Crucially, after every $T$ steps, the environment terminates and is reset, and, if the opponent is naive, the previous batch of length $T$ trajectories is used to update its parameter following a RL update rule of choice. 
    \item For each collected batched trajectories, the policy gradient of the meta agent parameter is computed following the COALA-PG update rule (or other baselines, c.f. \ref{app:baseline_detail}) if the opponent is naive, and the standard policy gradient otherwise otherwise (i.e. the batch and meta batch dimensions are flattened). Crucially, the done signals from the inner episodes are ignored. The gradient is then averaged, and the parameter updated.
\end{enumerate}

\subsubsection{Naive agent}\label{app:wrapping_detail}

The naive agent parameters are used to initialize the naive opponents when training the meta agents, but the resulting trained parameters are discarded. The initialization may or may not be nonetheless updated during training. In a more challenging environment however, training from scratch until good performance is achieved in a single meta trajectory may require prohibitively many inner episodes. To avoid this, in some of our experiments, at each training iteration, we set each $\{\psi_i\}$ to be equal to one of the $\{\phi_i\}$. This ensures that naive agents are initialized as an already capable agent, and is possible due to our choice of common architecture between naive and meta agents (c.f. below). In that case, we say that the naive agents are \textbf{dynamic}. Otherwise, naive agent is always initailized at one of a predefined static set of parameters.

\subsection{Architecture}

We choose a Hawk recurrent neural network as the policy and value function backbone \citep{de_griffin_2024}, for all methods, both for meta and naive agents. First, a linear layer projects the observation into an embedding space of dimension $32$. Then, a single residual Hawk recurrent neural network with LRU width 32, MLP expanded width 32 and 2 heads follows. Finally, an RMS normalization layer is applied, after which 2 linear readouts, one for the value estimate, and the other for the policy logits, are applied.

All meta agent and naive agent parameters are initialized following the standard initialization scheme of Hawk. The last readout layers are however initialized to 0.

\subsection{Hyperparameters for each experiment}

In all experiments, we first fix the environment hyperparameters. In order to find the suitable hyperparameter for each methods, we  perform for each of them a sweep over reinforcement learning hyperparameters, and select the best hyperparameters over after averaging over 3 seeds. The final performance and metrics are then computed using 5 fresh seeds.

In all our experiments, naive agents update their parameters using the Advantage Actor Critic (A2C) algorithm, without value bootstrapping on the batch of length T trajectories. The  hyperparameter for all experiments, can be found on  Table~\ref{tab_app:hp_naive_a2c}.

\paragraph{IPD, Figure \ref{fig:PG-IPD}} We perform 2 experiments in the IPD environment, (i) the pure Shaping experiment with $p_\text{naive}=1$ to investigate the shaping capabilities of meta agents, and (ii) the mixed pool setting with $p_\text{naive}=0.75$, to investigate the collaboration capabilities of meta agents. For both experimental setting, we show the environment hyperparameters in Table~\ref{tab_app:hp_ipd}. All meta agents are trained by PPO and Adam optimizer. For each method, we sweep hyperparameters over range specified in Table~\ref{tab_app:hp_ipd_range_sweep}. Table ~\ref{tab:hp_ipd_rl} shows the resulting hyperparameters for all methods.

\begin{table}[]
\centering
\caption{Hyperparameter fixed for the IPD experiments}\label{tab_app:hp_ipd}
\begin{tabular}{@{}lll@{}}
\toprule
\textbf{IPD Hyperparameter}              & \textbf{Pure Shaping} & \textbf{Mixed Pool} \\ \midrule
\texttt{training\_iteration}       & $3000$   & $3000$      \\
\texttt{meta\_batch\_size}          & $128$    & $128$       \\
\texttt{batch\_size (B)}      & $16$     & $16$        \\
\texttt{num\_inner\_episode (M)}         & $20$     & $20$        \\
\texttt{inner\_episode\_length (T)}          & $10$     & $10$        \\
\texttt{p\_\text{naive}}  & $1.$      & $0.75$         \\
\texttt{population\_size (meta)}  & $1$      & $4$         \\
\texttt{population\_size (naive)}  & $10$      & $10$         \\
\texttt{dynamic\_naive\_agents}  & False      & False         \\
\bottomrule
\end{tabular}
\end{table}

\begin{table}[htpb!]
\centering
\caption{The range of values swept over for hyperparameter search for each method for the IPD environment}\label{tab_app:hp_ipd_range_sweep}
\begin{tabular}{@{}ll@{}}
\toprule
\textbf{RL Hyperparameter}              & \textbf{Range}                                    \\ \midrule
\texttt{advantages\_normalization} & $\{False, True\}   $                             \\
\texttt{value\_discount ($\gamma$)} & $\{0.999, 1.0\}   $                     \\
\texttt{gae\_lambda ($\lambda_\text{gae}$)} & $\{0.98, 1.0\}  $                \\
\texttt{learning\_rate} & $\{0.003, 0.001, 0.0003\}$                \\
\bottomrule
\end{tabular}
\end{table}

\begin{table}[]
\centering
\caption{Hyperparameters used for the IPD Shaping and Mixed Pool experiments. Despite the search, the hyperparameter chosen for each method were identical}\label{tab:hp_ipd_rl}
\begin{tabular}{@{}lll@{}}
\toprule
\textbf{RL Hyperparameter}              & \textbf{Pure Shaping} & \textbf{Mixed Pool} \\ \midrule
\texttt{algorithm}                          & PPO                   & PPO                \\
\texttt{ppo\_nminibatches}                    & $2$                   & $2$                \\
\texttt{ppo\_nepochs}                         & $4$                   & $4$                \\
\texttt{ppo\_clipping\_epsilon}               & $0.2$                 & $0.2$              \\
\texttt{value\_coefficient}                  & $0.5$                 & $0.5$              \\
\texttt{clip\_value}                         & True                  & True               \\
\texttt{entropy\_reg}             & $0$                   & $0$                \\
\texttt{advantage\_normalization}            & False                 & False              \\
\texttt{reward\_rescaling}                   & $0.05$                & $0.05$             \\
\texttt{$\gamma$}                           & $1$                   & $1$                \\
\texttt{$\lambda_\text{td}$}                & $1$                   & $1$                \\
\texttt{$\lambda_\text{gae}$}               & $1$                   & $1$                \\
\texttt{optimizer}                          & ADAM                  & ADAM               \\
\texttt{adam\_epsilon}                       & $0.00001$             & $0.00001$          \\
\texttt{learning\_rate}                      & $0.0003$              & $0.0003$           \\
\texttt{max\_grad\_norm}                  & $1$                   & $1$                \\
\bottomrule
\end{tabular}
\end{table}

\paragraph{Cleanup,  Figure \ref{fig:PG-CLEANUP}, \ref{fig:PG-CLEANUP-COOP}} Likewise, we have the pure shaping (Figure \ref{fig:PG-CLEANUP}) and mixed pool (Figure \ref{fig:PG-CLEANUP-COOP}) experiment in the Cleanup-lite environment. For both experimental setting, we show the environment hyperparameters in Table~\ref{tab_app:hp_cleanup}. All meta agents are trained by PPO and Adam optimizer for the pure shaping setting, while using A2C and SGD for the mixed pool setting. For each method, we sweep hyperparameters over range specified in Table~\ref{tab_app:hp_cleanup_range_sweep}. Table ~\ref{tab:hp_cleanup_rl} shows the resulting hyperparameters for PPO for all methods.

\begin{table}[htpb!]
\centering
\caption{Cleanup hyperparameters}\label{tab_app:hp_cleanup}
\begin{tabular}{@{}lll@{}}
\toprule
\textbf{Hyperparameter fixed for the Cleanup experiments}              & \textbf{Pure Shaping} & \textbf{ Mixed Pool} \\ \midrule
\texttt{training\_iteration}       & $3000$   & $30000$      \\
\texttt{meta\_batch\_size}          & $512$    & $512$       \\
\texttt{batch\_size (B)}      & $32$     & $64$        \\
\texttt{num\_inner\_episode (M)}         & $100$     & $5$        \\
\texttt{inner\_episode\_length (T)}          & $64$     & $64$        \\
\texttt{p\_\text{naive}}  & $1.$      & $0.75$         \\
\texttt{population\_size (meta)}  & $1$      & $3$         \\
\texttt{population\_size (naive)}  & $10$      & $3$         \\
\texttt{dynamic\_naive\_agents}  & False      & True         \\
\bottomrule
\end{tabular}
\end{table}

\begin{table}[htpb!]
\centering
\caption{The range of values swept over for hyperparameter search for each method for the Cleanup environment}\label{tab_app:hp_cleanup_range_sweep}
\begin{tabular}{@{}lll@{}}
\toprule
\textbf{RL Hyperparameter}                             & \textbf{Pure Shaping}                                           & \textbf{ Mixed Pool}                                      \\ \midrule
\texttt{advantages\_normalization}              & $\{False, True\}$                                       & $\{False, True\}$                                      \\
\texttt{value\_discount ($\gamma$)}         & $\{0.999, 1.0\}$                                 & $\{1.0\}$                                 \\
\texttt{learning\_rate}                     & $\{0.003, 0.001, 0.0003\}$                       & $\{0.03, 0.01, 0.5, 1.0\}$                      \\
\texttt{optimizer}            & \{ADAM\}             & \{SGD\}                   \\
\bottomrule
\end{tabular}
\end{table}

\begin{table}[htpb!]
\centering
\caption{Hyperparameters used for the Cleanup Shaping and Cleanup Pool experiments.}\label{tab:hp_cleanup_rl}
\resizebox{\columnwidth}{!}{
\begin{tabular}{@{}lcccccccc@{}}
\toprule
\textbf{RL Hyperparameter}              & \multicolumn{3}{c}{\textbf{Cleanup Shaping}} & \multicolumn{3}{c}{\textbf{Cleanup Pool}} \\ \cmidrule(lr){2-5} \cmidrule(lr){6-9} 
                                        & \textbf{Coala} & \textbf{Batch Unaware} & \textbf{M-FOS} & \textbf{LOLA}& \textbf{Coala} & \textbf{Batch Unaware} & \textbf{M-FOS}&\textbf{LOLA} \\ \midrule
\texttt{algorithm}                      & PPO           & PPO                    & PPO & -           & A2C           & A2C                    & A2C    & -          \\
\texttt{ppo\_nminibatches}               & $2$           & $2$                    & $2$& -           & -             & -                      & -     & -           \\
\texttt{ppo\_nepochs}                    & $4$           & $4$                    & $4$& -           & -             & -                      & -   & -             \\
\texttt{ppo\_clipping\_epsilon}         & $0.2$         & $0.2$                  & $0.2$& -         & -             & -                      & -    & -            \\
\texttt{value\_coefficient}             & $0.5$         & $0.5$                  & $0.5$& -         & -             & -                      & -  & -              \\
\texttt{clip\_value}                    & True          & True                   & True& -          & True          & True                   & True& True           \\
\texttt{entropy\_regularization}        & $0$           & $0$                    & $0$& $0$           & $0$           & $0$                    & $0$& $0$            \\
\texttt{advantage\_normalization}       & True         & True                   & True& True          & True          & True                   & True & True           \\
\texttt{$\gamma$}                          & $1$           & $1$                 & $1$& $1$       & $1$           & $1$                    & $1$& $1$            \\
\texttt{$\lambda_\text{td}$}                     & $1$           & $1$                    & $1$& $1$           & $1$           & $1$                    & $1$   & $1$            \\
\texttt{reward\_rescaling}              & $0.1$           & $1$                    & $1$& $1$           & $0.1$         & $0.1$                  & $0.1$& $0.1$          \\
\texttt{$\lambda_\text{gae}$}                    & $1$           & $1$                    & $1$ & $1$          & $1$           & $1$                    & $1$& $1$            \\
\texttt{optimizer}                      & ADAM          & ADAM                   & ADAM& SGD          & SGD           & SGD                    & SGD & SGD            \\
\texttt{adam\_epsilon}                  & $0.00001$     & $0.00001$              & $0.00001$& -     & -             & -                      & -& -              \\
\texttt{learning\_rate}                 & $0.001$       & $0.001$                & $0.003$ & $0.1$      & $0.1$        & $0.03$                 & $0.03$    & $0.03$       \\
\texttt{max\_grad\_norm}            & $1$           & $1$                    & $1$& -           & $1$           & $1$                    & $1$& $1$            \\
\bottomrule
\end{tabular}
}
\end{table}

\begin{table}[htpb!]
\caption{Naive agent hyperparameters used across different settings}\label{tab_app:hp_naive_a2c}
\centering
\resizebox{\columnwidth}{!}{
\begin{tabular}{@{}lllll@{}}
\toprule
\textbf{RL Hyperparameter} & \textbf{IPD Shaping} & \textbf{IPD Mixed} & \textbf{Cleanup Shaping} & \textbf{Cleanup Mixed} \\ \midrule
\texttt{algorithm}                          & A2C                   & A2C& A2C& A2C                \\
\texttt{advantages\_normalization}      & True                    & True                  & True                         & True                      \\
\texttt{reward\_rescaling}              & $0.05$                    & $0.05$                  & $0.1$                         & $0.1$                      \\
\texttt{value\_discount ($\gamma$)}     & $0.99$                    & $0.99$                  & $0.99$                         & $1$                      \\
\texttt{td\_lambda ($\lambda_\text{td}$)}  & $1.0$                     & $1.0$                   & $1.0$                          & $1.0$                       \\
\texttt{gae\_lambda ($\lambda_\text{gae}$)} & $1.0$                     & $1.0$                   & $1.0$                          & $1.0$                       \\
\texttt{value\_coefficient}             & $0.5$                     & $0.5$                   & $0.5$                          & $0.5$                       \\
\texttt{entropy\_reg}                   & $0.0$                     & $0.0$                   & $0.0$                          & $0.0$                       \\
\texttt{optimizer}                      & ADAM                      & ADAM                    & ADAM                           & SGD                        \\
\texttt{adam\_epsilon}                  & $0.00001$                 & $0.00001$               & $0.00001$                      & $-$                   \\
\texttt{learning\_rate}                 & $0.005$                   & $0.005$                 & $0.005$                        & $1.$                     \\
\texttt{max\_grad\_norm}                & $1.0$                     & $1.0$                   & $1.0$                          & $1.0$                       \\
\bottomrule
\end{tabular}
}
\end{table}

\newpage

\section{The analytical iterated prisoner's dilemma}\label{app:ipd_analytic}
For the experiments in Section \ref{sec:coop_hypothesis} and \ref{sec:lola}, we analytically compute the discounted expected return of an infinitely iterated prisoner's dilemma, and its parameter gradients. Automatic differentiation allows us then to explicitly backpropagate through the learning trajectory of naive learners, to compute the ground-truth meta update. In the following, we provide details on this approach. 

For both the naive learners and learning-aware meta agents, we consider tabular policies $\phi^i$ taking into account the previous action of both agents:
\begin{align*}
    \phi^i = [\phi^i_0, \phi^i_1, \phi^i_2, \phi^i_3, \phi^i_4]^\top
\end{align*}
with $\sigma(\phi^i_0)$ the probability of cooperating in the initial state (with sigmoid $\sigma$), and the next 4 parameters the logits of cooperating in states CC, CD, DC and DD respectively (CD indicates that first agent cooperated, and the second agent defected). As we use a tabular policy for the meta agents, they cannot accurately infer the opponent's parameters from context, but its policy gradient updates still inform it regarding the learning behavior of naive learners. Hence the meta agent can learn to shape naive learners while using a tabular policy, for example through zero-determinant extortion strategies \citep{press_iterated_2012}. Using both policies, we can construct a Markov matrix providing the transition probabilities of one state to the next, ignoring the initial state.
\small
\begin{align*}
    M &=\\
    &\begin{bmatrix} \sigma(\phi^1_{1:4}) \odot \sigma(\phi^2_{1:4}), \sigma(\phi^1_{1:4}) \odot (1-\sigma(\phi^2_{1:4})), (1-\sigma(\phi^1_{1:4})) \odot \sigma(\phi^2_{1:4}), (1-\sigma(\phi^1_{1:4}))\odot(1-\sigma(\phi^2_{1:4}))\end{bmatrix}^\top
\end{align*}
\normalsize
with $\odot$ the element-wise product. Given the payoff vectors $r^1=[1,-1,2,0]$ and $r^2=[1,2,-1,0]$, and initial state distribution $s_0 = [\sigma(\phi^1_0)\sigma(\phi^2_0), \sigma(\phi^1_0)(1-\sigma(\phi^2_0)), (1-\sigma(\phi^1_0))\sigma(\phi^2_0), (1-\sigma(\phi^1_0))(1-\sigma(\phi^2_0))]^\top$ we can write the expected discounted return of agent $i$ as
\begin{align}
    J^i(\phi^1, \phi^2) = r^{i,\top} \left[\sum_{t=0}^{\infty} \gamma^t M^t s_0 \right]
\end{align}
This discounted infinite matrix sum is a Neumann series of the inverse $(I - \gamma M)^{-1}$ with $I$ the identity matrix. This gives us:
\begin{align}
    J^i(\phi^1, \phi^2) = r^{i,\top} (I - \gamma M)^{-1}  s_0 
\end{align}
Both $M$ and $s_0$ depend on the agent's policies, and we can compute the analytical gradients using automatic differentiation (we use JAX). 

We model naive learners $\phi^{-i}$ as taking gradient steps on $J^{-i}$ with learning rate $\eta_{\text{naive}}$. The co-player shaping objective for meta agent $i$ is now 
\small
\begin{align}
    \bar{J}(\phi^i) = \sum_{m=0}^M J^i\left(\phi^i, \phi^{-i} + \sum_{q=1}^m \Delta_q \phi^{-i}\right) ~~~ \text{s.t.} ~~ \Delta_q\phi^{-i} = \eta_{\text{naive}} \frac{\partial}{\partial \phimi}J^i\left(\phi^i, \phi^{-i} + \sum_{q'=1}^{q-1} \Delta_{q'} \phi^{-i}\right)  
\end{align}
\normalsize
When a learning-aware meta agent faces a naive learner, we compute the shaping gradient by explicitly backpropagating through $\bar{J}(\phi^i)$, using automatic differentiation. When a learning-aware meta agent faces another meta agent, we compute the policy gradient as the partial gradient on $J^i(\phi^i, \phimi)$, as with tabular policies, the meta agents deploy the same policy in each inner episode, and hence averaging over inner episodes is equivalent to playing a single episode of meta vs meta. For training the meta agents, we use a convex mixture of the gradients against naive learners and gradients against the other meta agent, with mixing factor $p_{\text{naive}}$. For the gradients against naive learners, we use a batch of randomly initialized naive learners of size \texttt{metabatch}. We use the adamw optimizer from the Optax library to train the meta agents, with default hyperparameters and learning rate $\eta_{\text{meta}}$.

For the LOLA experiments of Section~\ref{sec:lola}, we compute the ground-truth LOLA-DICE updates \eqref{eqn:lola-dice} by initializing a naive learner with the opponent's parameters, simulate $M$ naive updates (look-aheads) following partial derivatives of $J^i$, and backpropagating through the final return $J^i\left(\phi^i, \phi^{-i} + \sum_{q=1}^M \Delta_q \phi^{-i}\right)$, including backpropagating through the learning trajectory. For Fig.~\ref{fig:lola_ipd}, we train two separate LOLA agents against each other and report the training curves of the first agent (the training curves of the second agent are similar, data not shown). Using self-play instead of other-play resulted in similar results with the same main conclusions (data not shown). For all experiments, we used the following hyperparemeters: $\gamma=0.95$, $\eta_{\text{meta}}=0.005$, $\eta_{\text{naive}}=5$ (except for 1-look-ahead, where we used $\eta_{\text{naive}}=10$). For Figure \ref{fig:lola_ipd}C, we used a convex mixture of the LOLA-DICE gradient and the partial gradient on $J^i(\phi^i, \phi^{-i})$ with mixing factor $p_{\text{naive}}$. We used $p_{\text{naive}}=1, 1, 0.75,0.6, 0.4$ for look-aheads 1, 2, 3, 10 and 20 respectively. 

\subsection{Additional results on the analytical IPD}\label{app:aipd_extra_results}
\paragraph{Learning-aware agents extort naive learners following Zero-Determinant-like extortion strategies.} Figure \ref{fig:analytical-IPD}A shows that learning-aware agents trained against naive learners find a policy that extorts the naive learners into unfair cooperation. Here, we investigate the resulting extortion policies in more detail, and show that they are similar to the Zero-Determinant extortion strategies discovered by \citet{press_iterated_2012}. Zero-determinant extortion strategies are parameterized by $\chi$ and $\phi$ as follows (with $(T,R,P,S) = (2,1,0,-1)$ the rewards of the prisoner's dilemma): 
\begin{equation}\label{eqn:zd_extortion_policy}
    \begin{split}
        p_1 &= 1 - \phi(\chi-1)\frac{R-P}{P-S} \\
        p_2 &= 1 - \phi \left(1 + \chi \frac{T-P}{P-S} \right)\\
        p_3 &= \phi \left(\chi + \frac{T-P}{P-S} \right)\\
        p_4 &= 0
    \end{split}
\end{equation}
with $\chi \geq 1$ and $0 < \phi \leq \frac{P-S}{(P-S) + \chi(T-P)}$. For $\chi=1$ and $\phi = \frac{P-S}{(P-S) + (T-P)}$ we recover the tit-for-tat strategy, representing the \textit{fair} shaping strategy, whereas for higher values of $xi$, the resulting policies extort the naive learner into unfair cooperation. Note that \citet{press_iterated_2012} did not consider a $p_0$ parameter, as there theory is independent of the choice for $p_0$. 

To investigate whether our learned co-player shaping policies are related to ZD extortion strategies, we take the converged policy $\sigma(\phi^i)$ after training with the pure shaping objective (c.f. Figure \ref{fig:analytical-IPD}A), and fit the parameters $(\chi, \phi)$ to the regression loss $\|\sigma(\phi^i)[1:5] - p^{ZD}(\chi, \phi)\|^2$, with $p^{ZD}(\chi, \phi)$ the ZD extortion policy of Eq. \ref{eqn:zd_extortion_policy}. 
Figure \ref{fig:zd_extortion_fit} shows that policies learned with the co-player shaping objective can be well aproximated by ZD extortion policies, whereas random policies cannot. The ZD extortion policies of Eq. \ref{eqn:zd_extortion_policy} consider undiscounted infinitely repeated matrix games, whereas we consider discounted infinitely repeated prisoner's dilemma with discount $\gamma=0.999$. Furthermore, our shaping objective considers the cumuluted returns over the whole learning trajectory of the naive learner, in contrast to ZD extortion strategies that are optimized for the maximizing the return of the last inner episode. Hence, we should not expect an exact match between the learned policies $\sigma(\phi^i)$ and the ZD extortion strategies. 

\begin{figure}
    \centering
    \includegraphics[width=0.4\linewidth]{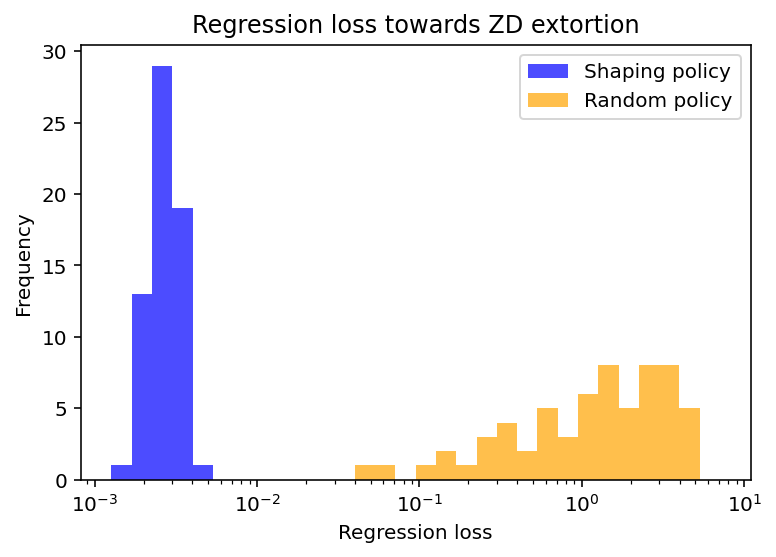}
    \caption{Histogram of the regression losses after fitting the $(\chi, \phi)$ parameters to the learned co-player shaping policies from Figure \ref{fig:analytical-IPD}A for 64 random seeds, versus fitting the  $(\chi, \phi)$ parameters to 64 uniform random policies.}
    \label{fig:zd_extortion_fit}
\end{figure}

\paragraph{Mutual unconditional defection is not a Nash equilibrium in the mixed group setting.}
First, we check numerically whether mutual unconditional defection results in a zero gradient, a necessary condition for being a Nash equilibrium. As a zero probability corresponds to infinite logits, we parameterize our policy now directly in the probability space instead of logit space, and consider projected gradient ascent to the probability simplex, i.e. clipping the updated parameters between 0 and 1. For non-zero mixing factors $p_{\text{naive}}$, this results in a projected gradient that is 0 everywhere, except for the parameter corresponding to the DC state. For shaping naive learners, it is beneficial to reward co-players that cooperate by also cooperating with non-zero probability afterwards. Hence, when an agent with a pure defection policy plays against naive learners, the resulting gradient will push it out of the pure defection policy. 

Figure \ref{fig:defection_init_combined_figure}A shows that when we train unconditional defection policies in the mixed group setting with the same hyperparameters as for Figure \ref{fig:analytical-IPD}C, the agents escape mutual defection and learn to cooperate. Note that the agents quickly learn how to shape naive learners, and that it takes a bit longer to learn full cooperation, as the shaping objective does not provide pressure to increase the cooperation probability in the starting state. However, as the shaping policies of the meta agents are not any longer unconditional defection, playing against other meta agents provides a pressure to increase the cooperation probability, eventually leading to a phase transition towards cooperation. Figure \ref{fig:defection_init_combined_figure}B shows the parameter trajectory in logit space over training, showing that indeed the agents adjust quickly their parameters for shaping, and eventually also the initial state cooperation probability, leading to cooperation against other meta agents. As our policies are parameterized in logit space, we initialize them to $\log 0.01$ instead of exactly to zero cooperation probability to avoid infinities. 

\begin{figure}
    \centering
    \includegraphics[width=0.7\linewidth]{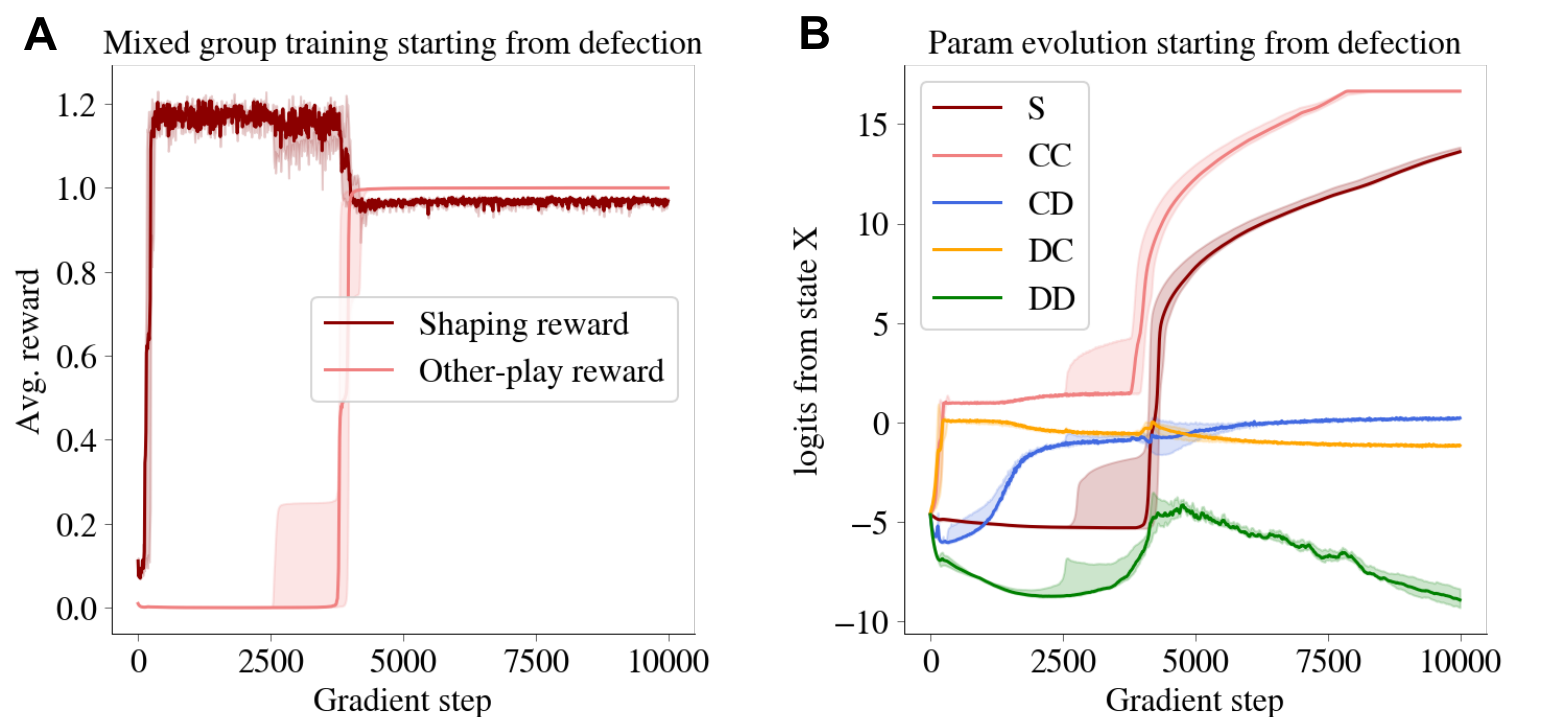}
    \caption{(A) Average reward during training in a mixed group setting with both agents starting from an unconditional defection policy, when evaluating the learned policy versus naive agents (shaping reward) and versus the other learned policy (other-play reward). (B) Parameter trajectory in logit space of the first agent (the second agent has similar learning trajectories, data not reported). Shaded regions indicate 0.25 and 0.75 quantiles, and solid lines the median over 8 random seeds.}
    \label{fig:defection_init_combined_figure}
\end{figure}

\paragraph{Tit for Tat is not a Nash equilibrium in the mixed group setting.} Figure \ref{fig:tft_init_combined_figure} repeats the same analysis but now starting from Tit for Tat policies, showing that mutual Tit for Tat is not a Nash equilibrium in our mixed pool setting. As we show in Figure \ref{fig:tft_init_combined_figure}C, this is caused by the possibility to shape naive learners faster by deviating from a strict tit for tat policy. Note that even though the resulting policies are not perfect tit for tat, they still fully cooperate when played against each other. 

\begin{figure}
    \centering
    \includegraphics[width=0.95\linewidth]{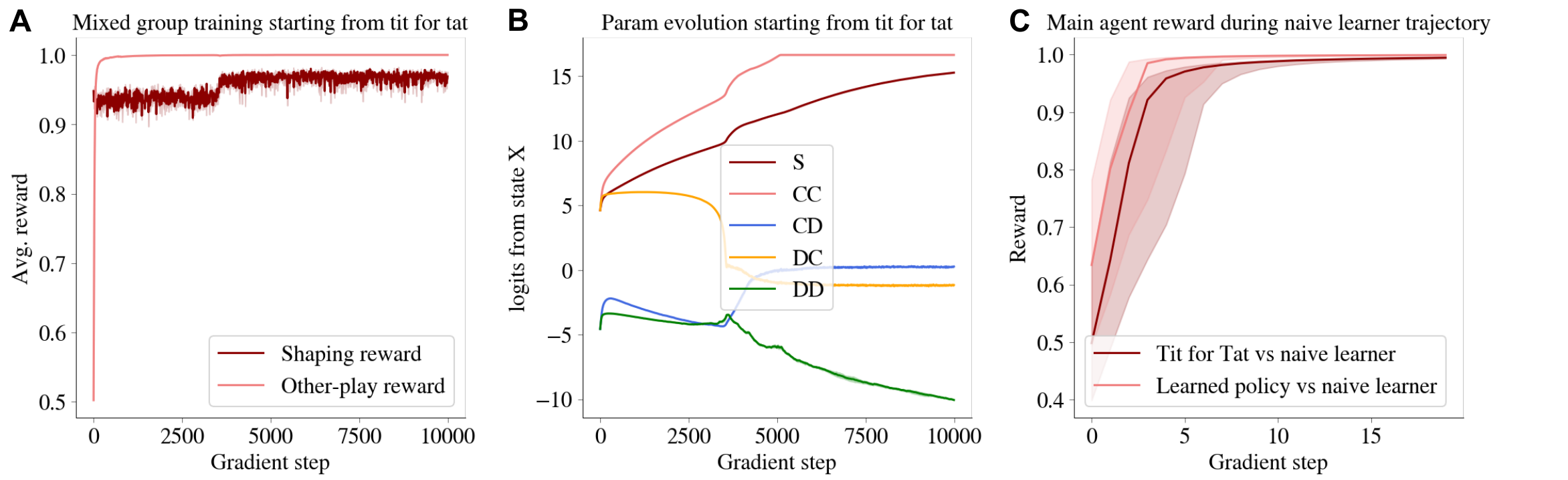}
    \caption{(A) Average reward during training in a mixed group setting with both agents starting from a tit for tat policy, when evaluating the learned policy versus naive agents (shaping reward) and versus the other learned policy (other-play reward). (B) Parameter trajectory in logit space of the first agent (the second agent has similar learning trajectories, data not reported). (C) The reward of a main agent when playing against a naive learner over a trajectory of 20 naive learning steps, showing that the policy learned after convergence in the mixed group setting shapes a naive learner faster compared to a tit for tat policy. Shaded regions indicate 0.25 and 0.75 quantiles, and solid lines the median over 8 random seeds.}
    \label{fig:tft_init_combined_figure}
\end{figure}

\section{Proofs}
\label{apx:proofs}

\begin{theorem}
Take the expected shaping return $\bar{J}(\phi^i) = \expect{\bar{P}^{\phi^i}}{\frac{1}{B}\sum_{b=1}^B \sum_{l=0}^{MT} R^{i,b}_l}$, with $\bar{P}^{\phi^i}$ the distribution induced by the environment dynamics $\bar{P}_t$, initial state distribution $\bar{P}_i$ and policy $\phi^i$. Then the policy gradient of this expected return is equal to
\begin{align}
    \nabla_{\phi^i} \bar{J}(\phi^i) =  \expect{\bar{P}^{\phi^i}}{\sum_{b=1}^B \sum_{l=1}^{MT} \nabla_{\phi} \log \pi^i(a^{i,b}_l \mid h^{i,b}_l) \left(\frac{1}{B} \sum_{l'=l}^{m_lT} r_{l'}^{i,b} +  \frac{1}{B} \sum_{b'=1}^B \sum_{l'=m_lT+1}^{MT} r_{l'}^{i,b'}\right)}.
\end{align}
\end{theorem}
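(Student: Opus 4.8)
The plan is to derive the estimator from the score-function (REINFORCE) form of the policy gradient theorem, and then to prune, by a causality argument specific to the batched POMDP, the reward sum attached to each score term. First I would use the factorization $\bar{\pi}^i(\bar{a}^i_l \mid \bar{h}^i_l;\phi^i) = \prod_{b=1}^B \pi^i(a^{i,b}_l \mid h^{i,b}_l;\phi^i)$ established in Section~\ref{sect:single-level-shaping}, so that the log-likelihood of an entire meta-episode splits into the double sum $\sum_{b=1}^B \sum_{l=1}^{MT}\log\pi^i(a^{i,b}_l\mid h^{i,b}_l)$. Since the meta-dynamics $\bar{P}_t$ and initial distribution $\bar{P}_i$ do not depend on $\phi^i$, applying the policy gradient theorem to $\bar{J}$ yields the unpruned estimator
\begin{align}
\nabla_{\phi^i}\bar{J}(\phi^i) = \expect{\bar{P}^{\phi^i}}{\Bigl(\sum_{b=1}^B\sum_{l=1}^{MT}\nabla_\phi\log\pi^i(a^{i,b}_l\mid h^{i,b}_l)\Bigr)\Bigl(\frac{1}{B}\sum_{b'=1}^B\sum_{l'=0}^{MT}R^{i,b'}_{l'}\Bigr)}. \nonumber
\end{align}

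Next I would invoke the standard zero-mean-score (causality) lemma: conditioning on the history $h^{i,b}_l$ gives $\mathbb{E}[\nabla_\phi\log\pi^i(a^{i,b}_l\mid h^{i,b}_l)\mid h^{i,b}_l]=0$, so the product of a score at $(b,l)$ with a reward $R^{i,b'}_{l'}$ has zero expectation whenever $R^{i,b'}_{l'}$ is conditionally independent of $a^{i,b}_l$ given $h^{i,b}_l$ — that is, whenever the reward is not causally downstream of the action. The remaining content of the proof is then to characterize precisely which rewards survive this pruning.

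This requires reading off the causal structure of the batched meta-dynamics $\bar{P}_t$, which I would argue has two regimes. Within a single inner episode the $B$ environment copies evolve independently and the co-player parameters $\phi^{-i}_m$ are held fixed, so $a^{i,b}_l$ can influence only the future rewards of its own stream, namely $R^{i,b}_{l'}$ for $l\le l'\le m_l T$; earlier rewards and rewards of other streams $b'\neq b$ in the current or earlier inner episodes are independent of $a^{i,b}_l$ and drop out. At each inner-episode boundary, however, the naive co-players perform a minibatch update that aggregates all $B$ streams, so the updated parameters $\phi^{-i}_{m_l+1}$, and hence every state and reward of every stream in all subsequent inner episodes, depend on stream $b$'s data and therefore on $a^{i,b}_l$; thus all rewards $R^{i,b'}_{l'}$ with $l'>m_l T$ and arbitrary $b'$ are downstream and survive. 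Collecting the surviving terms, each score $\nabla_\phi\log\pi^i(a^{i,b}_l\mid h^{i,b}_l)$ is multiplied by $\frac{1}{B}\sum_{l'=l}^{m_l T}R^{i,b}_{l'}+\frac{1}{B}\sum_{b'=1}^B\sum_{l'=m_l T+1}^{MT}R^{i,b'}_{l'}$, which is exactly the claimed expression; the pre-action reward at $l'=0$ is upstream of every action and always drops.

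I expect the main obstacle to be the rigorous justification of these conditional-independence claims, and in particular a clean statement of the cross-stream dependency that the minibatch co-player update injects at each episode boundary. The cleanest route is probably to model the whole meta-episode as a single Markov chain over the augmented state consisting of the $B$ inner environment states together with the co-player parameters $\phi^{-i}_m$, and to track its induced dependency graph, so that per-stream independence within an inner episode and all-stream coupling at each boundary become explicit properties of this chain rather than informal appeals to causality.
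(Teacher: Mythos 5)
Your proposal is correct and follows essentially the same route as the paper's proof: apply the score-function form of the policy gradient to the batched POMDP, use the factorization $\bar{\pi}^i = \prod_b \pi^i$, and then prune reward terms via the zero-mean-score argument based on the fact that cross-stream coupling occurs only through the co-player minibatch update at inner-episode boundaries. The only cosmetic difference is that the paper first derives the reward-to-go form explicitly (by computing the score of the marginal history distribution) and then prunes only the cross-stream current-episode terms, whereas you start from the full-return estimator and remove both the past rewards and those cross-stream terms with a single conditional-independence lemma.
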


\begin{proof}
In the co-player shaping batched POMDP there is only one agent that is relevant for the policy gradient, as all other agents are naive learners and subsumed in the environment dynamics. Hence, to avoid overloading notations, we drop the $i$ superscript in the parameters, actions, policy and histories. Furthermore, we use the notation $a_l = \{a_l^b\}_{b=1}^B$ and similarly for $h_l$. 

We start by writing down the gradient of $\bar{J}(\phi)=\expect{\bar{P}^{\phi^i}}{\frac{1}{B}\sum_{b=1}^B \sum_{l=0}^{MT} R^{i,b}_l}$, making the summations in its expectation explicit. 
\begin{align*}
    \nabla_\phi \bar{J}(\phi) = \nabla_\phi \sum_{l=0}^{MT} \sum_{r_l \in \bar{\calR}}\sum_{a_l \in \bar{\calA}}\sum_{h_l \in \bar{\calH}_l}\bar{P}^\phi(h_l)\bar{\pi}(a_l\mid h_l)\bar{P}(r_l \mid h_l, a_l) \frac{1}{B}\sum_b r_l^b
\end{align*}
with $\bar{\calR} = \times_b \calR$ the joint reward space, $\bar{\calH}_l$ the joint space over possible batched histories up until timestep $l$, and $\bar{\pi}(a_l\mid h_l)=\prod_{b=1}^B\pi(a_l^b \mid h_l^b)$. Applying the chain rule leads to 
\begin{align*}
    \nabla_\phi \bar{J}(\phi) = &\sum_{l=0}^{MT} \sum_{r_l \in \bar{\calR}}\sum_{a_l \in \bar{\calA}}\sum_{h_l \in \bar{\calH}_l}\nabla_\phi\bar{P}^\phi(h_l)\bar{\pi}(a_l\mid h_l)\bar{P}(r_l \mid h_l, a_l) \frac{1}{B}\sum_b r_l^b \hdots \\
    & + \sum_{l=0}^{MT} \sum_{r_l \in \bar{\calR}}\sum_{a_l \in \bar{\calA}}\sum_{h_l \in \bar{\calH}_l}\bar{P}^\phi(h_l)\nabla_\phi\bar{\pi}(a_l\mid h_l)\bar{P}(r_l \mid h_l, a_l) \frac{1}{B}\sum_b r_l^b
\end{align*}
as the reward dynamics $\bar{P}(r_l \mid h_l)$ are independent of the policy parameterization $\phi$. We first investigate the gradient of the marginal distribution $\nabla_\phi\bar{P}^\phi(h_l)$, by marginalizing over the joint trajectory distribution.
\begin{align*}
    &\nabla_\phi\bar{P}^\phi(h_l) = \nabla_\phi \sum_{\{a_{l'} \in \bar{\calA}, \{h_{l'} \in \bar{\calH}_{l'}\}_{l'<l}} \bar{P}(h_l \mid h_{l-1}, a_{l-1}) \prod_{l'<l}\bar{P}(h_{l'}\mid h_{l'-1}, a_{l'-1})\bar{\pi}(a_{l'}\mid h_{l'}) \\
    &= \sum_{\{a_{l'} \in \bar{\calA}, \{h_{l'} \in \bar{\calH}_{l'}\}_{l'<l}} \bar{P}(h_l \mid h_{l-1}, a_{l-1}) \prod_{l'<l}\bar{P}(h_{l'}\mid h_{l'-1}, a_{l'-1})\bar{\pi}(a_{l'}\mid h_{l'}) \sum_{l''<l}\nabla_{\phi}\log \bar{\pi}(a_{l''} \mid h_{l''}) \\
    &= \expect{\bar{P}^\phi}{\sum_{l'<l}\nabla_\phi\log \bar{\pi}(a_{l'} \mid h_{l'}) }
\end{align*}
with $\{a_{l'} \in \bar{\calA}, \{h_{l'} \in \bar{\calH}_{l'}\}_{l'<l}$ the joint space over all actions and histories over timesteps $l'<l$. In the second line, we used the chain rule and $\nabla_\phi \bar{\pi} = \bar{\pi} \nabla_\phi \log \bar{\pi}$. In the third line we renamed the index $l''$ to $l'$. 

Filling this expression for $\nabla_\phi\bar{P}^\phi(h_l)$ into our expression for $\nabla_\phi \bar{J}(\phi)$, combined with the log trick $\nabla_\phi \bar{\pi} = \bar{\pi} \nabla_\phi \log \bar{\pi}$ and using the expectation notation for clarity of notation, we end up with
\begin{align*}
    \nabla_\phi \bar{J}(\phi) = \expect{\bar{P}^\phi}{\frac{1}{B}\sum_{b=1}^B\sum_{l=0}^{MT}r_l^b \sum_{l' \leq l} \nabla_\phi \log \bar{\pi}(a_{l'} \mid h_{l'})}
\end{align*}
Reordering the summations, and using $\bar{\pi}(a_l\mid h_l)=\prod_{b=1}^B\pi(a_l^b \mid h_l^b)$ leads to 
\begin{align*}
    \nabla_{\phi} \bar{J}(\phi) =  \expect{\bar{P}^{\phi}}{\sum_{b=1}^B \sum_{l=1}^{MT} \nabla_{\phi} \log \pi(a^{b}_l \mid h^{b}_l) \left( \frac{1}{B} \sum_{b'=1}^B \sum_{l'=l}^{MT} r_{l'}^{b'}\right)}.
\end{align*}
Finally, actions can only influence the other parallel trajectories through the parameter updates of the naive learners in the environment, which takes place at the inner episode boundaries. Hence, during the current inner episode (before a naive learner update takes place), rewards $r_l^{b'}$ are independent of actions $a_l^{b}$ for $b \neq b'$. As $\expect{a\sim\pi}{c \log\pi(a\mid h)}=0$ for a constant $c$ independent of the actions, the policy gradient is equal to
\begin{align*}
    \nabla_{\phi} \bar{J}(\phi) =  \expect{\bar{P}^{\phi}}{\sum_{b=1}^B \sum_{l=1}^{MT} \nabla_{\phi} \log \pi(a^{b}_l \mid h^{b}_l) \left(\frac{1}{B} \sum_{l'=l}^{m_lT} r_{l'}^{b} +  \frac{1}{B} \sum_{b'=1}^B \sum_{l'=m_lT+1}^{MT} r_{l'}^{b'}\right)}.
\end{align*}
with $m_l$ the inner episode index corresponding to timestep $l$, thereby concluding the proof. 

\end{proof}

\begin{theorem}
Assuming that (i) the COALA policy is only conditioned on inner episode histories $x_{m,t}^{i,b}$ instead of long histories $h_l^{i,b}$, and (ii) the naive learners are initialized with the current parameters $\phimi$ of the other agents, then the \texttt{COALA-PG} update on the batched co-player shaping POMDP is equal to
\begin{align}
     \nabla_{\phi^i} \bar{J}(\phi^i) = \frac{\mathrm{d}}{\mathrm{d}\phi^i}\sum_{m=0}^{M}\expect{\bar{P}^{\phi^i}}{J^i\left(\phi^i, \phi^{-i} + \sum_{q=1}^m \Delta \phi^{-i}(\bar{x}_{q,T})\right)}
\end{align}
with $\Delta \phi^{-i}(\bar{x}_{q,T})$ the naive learner's update based on the batch of inner-episode histories $\bar{x}_{q,T}$, and $\frac{\mathrm{d}}{\mathrm{d}\phi^i}$ the total derivative, taking into account both the influence of $\phi^i$ through the first argument of $J^i(\phi^i, \phi^{-i} + \sum_{q=1}^m \Delta \phi^{-i}(\bar{x}_{q,T}))$, as well as through the parameter updates $\Delta \phimi$ by adjusting the distribution over $\bar{x}_{q,T}$.
\end{theorem}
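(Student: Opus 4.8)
The plan is to avoid manipulating the gradient expression directly and instead establish an \emph{equivalence of scalar objectives}, from which equality of gradients is immediate. By Theorem~\ref{thrm:coala-pg}, the \texttt{COALA-PG} update is exactly $\nabla_{\phi^i}\bar{J}(\phi^i)$, the gradient of the shaping objective of Eq.~\ref{eqn:flat_shaping_problem}. On the other side, the right-hand side of the statement is, by definition of the total derivative, nothing but $\nabla_{\phi^i}$ of the scalar $g(\phi^i) := \sum_{m=0}^M \expect{\bar{P}^{\phi^i}}{J^i(\phi^i, \phi^{-i} + \sum_{q=1}^m \Delta\phi^{-i}(\bar{x}_{q,T}))}$, where $\phi^i$ appears both in the first argument of $J^i$ and in the sampling distribution $\bar{P}^{\phi^i}$ (hence in the realized $\bar{x}_{q,T}$ and the resulting naive updates). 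It therefore suffices to prove $g(\phi^i) = \bar{J}(\phi^i)$ under assumptions (i) and (ii); equality of gradients then follows trivially.

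First I would make the meta-dynamics explicit. By assumption (ii) the co-player starts each meta-trajectory at $\phi^{-i}$, and the environment applies the naive minibatch update $\Delta\phi^{-i}(\bar{x}_{q,T})$ at each inner-episode boundary. Consequently, the co-player parameters governing inner episode $m$ are $\phi^{-i}_m = \phi^{-i} + \sum_{q=1}^m \Delta\phi^{-i}(\bar{x}_{q,T})$, a quantity measurable with respect to the histories of the strictly preceding inner episodes only (causality of the naive updates). I would then group the total return $\frac{1}{B}\sum_b \sum_{l=0}^{MT} R^{i,b}_l$ into a sum over the inner episodes indexed by $m$, writing $\bar{J}(\phi^i)$ as the expectation of $\sum_m$ of the batch-averaged inner-episode returns.

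The crucial step uses assumption (i). Because the ego policy conditions only on the current inner-episode history $x^{i,b}_{m,t}$ and the environment is reset at every boundary, the $B$ parallel trajectories within inner episode $m$ are, conditionally on $\phi^{-i}_m$ and $\phi^i$, independent and identically distributed draws from the inner game played by $\pi^i(\cdot\,;\phi^i)$ against a co-player with fixed parameters $\phi^{-i}_m$. Hence the conditional expectation of the batch-averaged return of that episode is exactly $J^i(\phi^i, \phi^{-i}_m)$. Applying the tower property, conditioning on $\phi^{-i}_m$ (determined by earlier episodes), collapses each inner-episode term and yields $\bar{J}(\phi^i) = \sum_{m=0}^M \expect{\bar{P}^{\phi^i}}{J^i(\phi^i, \phi^{-i}_m)} = g(\phi^i)$. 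Differentiating gives the claim; expanding the total derivative recovers the familiar policy-gradient split into a pathwise term (the gradient of $J^i$ through its first argument) and a score-function term $J^i\, \nabla_{\phi^i}\log\bar{P}^{\phi^i}$, the latter being precisely the contribution of $\phi^i$ ``through the distribution over $\bar{x}_{q,T}$'', captured without any higher-order derivatives.

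The hard part will be the conditioning argument: I must verify carefully that $\phi^{-i}_m$ is measurable with respect to strictly earlier inner episodes (so that conditioning is causal and the subsequent episode's trajectories are conditionally i.i.d.), and that assumption (i) is exactly what makes the conditional inner-episode return a function of $(\phi^i, \phi^{-i}_m)$ alone --- without it the policy would depend on earlier episodes and the return would not reduce to $J^i$. Secondary bookkeeping concerns include aligning the inner-episode indexing between the two sides and matching the discount conventions (the undiscounted meta-level sum versus the discount folded into $J^i$ and into the per-step rewards $r^{i,b}_l$), and confirming that the shared, batch-computed naive update is consistent with treating $\phi^{-i}_m$ as common to all $B$ trajectories of episode $m$.
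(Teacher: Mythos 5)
Your proposal is correct and follows essentially the same route as the paper's proof: both rewrite $\bar{J}(\phi^i)$ as a sum over inner episodes, condition on the history up to each episode boundary (equivalently, on $\phi^{-i}_m$) so that assumption (i) collapses the conditional batch-averaged return to $J^i(\phi^i,\phi^{-i}_m)$, and then identify the gradient with the total derivative. The only differences are presentational (your explicit tower-property framing and the closing remark on the score-function decomposition), not substantive.
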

\begin{proof}
We start by restructuring the long-history expected return $\bar{J}$ of the batched co-player shaping POMDP into a sum of inner episode expected returns $J$ from the multi-agent POSG, leveraging the assumptions that (i) the COALA policy is only conditioned on inner episode histories $x_{m,t}^{i,b}$ instead of long histories $h_l^{i,b}$, and (ii) the naive learners are initialized with the current parameters $\phimi$ of the other agents.
\begin{align*}
    \bar{J}(\phi^i) &= \expect{\bar{P}^{\phi^i}}{\frac{1}{B}\sum_{b=1}^B \sum_{t=0}^{MT} R^{i,b}_t} \\
    &= \sum_{m=0}^{M-1}\expect{\bar{P}^{\phi^i}}{\frac{1}{B}\sum_{b=1}^B \sum_{t=mT+1}^{(m+1)T} R^{i,b}_t} \\
    &= \sum_{m=0}^{M-1}\expect{\bar{P}^{\phi^i}(\bar{h}_{mT})}{\frac{1}{B}\sum_{b=1}^B \expect{P^{\phi^i, \phimi + \sum_{q=1}^m \Delta \phimi(\bar{x}_{q,T})}}{\sum_{t=mT+1}^{(m+1)T} R^{i,b}_t}} \\
    &= \sum_{m=0}^{M-1}\expect{\bar{P}^{\phi^i}}{J^i\left(\phi^i, \phimi + \sum_{q=1}^m \Delta \phimi(\bar{x}_{q,T})\right)}
\end{align*}
with $P^{\phi^i, \phimi + \sum_{q=1}^m \Delta \phimi(\bar{x}_{q,T})}$ the distribution induced by the environment dynamics of the multi-agent POSG, played with policies $(\phi^i, \phimi + \sum_{q=1}^m \Delta \phimi(\bar{x}_{q,T}))$. The step from line two to three is made possible by the assumption that the COALA policy $\pi^i$ is only conditioned on inner episode histories $x_{m,t}^{i,b}$ instead of long histories $h_l^{i,b}$. This ensures that the distribution $\bar{P}^{\phi^i}$ over the batch of inner-episode histories $\bar{x}_{m+1,T}$ only depends on $(\phi^i, \phimi + \sum_{q=1}^m \Delta \phimi(\bar{x}_{q,T}))$, as the policy $\pi^i$ does not take previous observations from before the current inner episode boundary into account.  

A policy gradient takes into account the full effect of the parameters of a policy on the trajectory distribution induced by the policy. In our batched co-player shaping POMDP, the trajectory distribution induced by policy $\phi^i$ influences the reward distribution in the concatenated POSG, as well as the naive learner's current parameters through the inner episode batches they use for their updates. Hence, we have that 
\begin{align*}
     \nabla_{\phi^i} \bar{J}(\phi^i) = \frac{\mathrm{d}}{\mathrm{d}\phi^i}\sum_{m=0}^{M}\expect{\bar{P}^{\phi^i}}{J^i\left(\phi^i, \phi^{-i} + \sum_{q=1}^m \Delta \phi^{-i}(\bar{x}_{q,T})\right)}
\end{align*}
with $\frac{\mathrm{d}}{\mathrm{d}\phi^i}$ the total derivative, taking into account both the influence of $\phi^i$ through the first argument of $J^i(\phi^i, \phi^{-i} + \sum_{q=1}^m \Delta \phi^{-i}(\bar{x}_{q,T}))$, as well as through the parameter updates $\Delta \phimi$ by adjusting the distribution over $\bar{x}_{q,T}$, thereby concluding the proof.

\end{proof}

\section{Relating COALA-PG to Learning with Opponent-Learning Awareness (LOLA)}
\label{app:lola-review}
In this section, we establish a formal relationship between COALA-PG and Learning with Opponent-Learning Awareness \citep[LOLA;][]{foerster_learning_2018}, the seminal work that spearheaded the learning awareness field. In doing so, we further show how COALA-PG can be used to derive a new LOLA gradient estimator that does not require higher-order derivative estimates. 

LOLA considers a POSG (c.f.~Section \ref{sect:background-shaping}) with agent policies $(\phi^i,\phimi)$, and expected return $J^i(\phi^i, \phimi)$. Recall from Section~\ref{sec:coop_hypothesis} that instead of estimating the naive gradients $\nabla_{\phi^i}J^i(\phi^i, \phimi)$, LOLA anticipates that co-players update their parameters with $M$ naive gradient steps. The improved LOLA-DICE \citep{foerster_dice_2018} update reads:
\small
\begin{align}\label{app_eqn:lola-dice}
    \nabla_{\phi^i}^{\mathrm{LOLA}} = \frac{\mathrm{d}}{\mathrm{d}\phi^i} \left[J^i\left(\phi^i, \phi^{-i} + \sum_{q=1}^M \Delta_q \phi^{-i}\right) ~~~ \text{s.t.} ~~ \Delta_q\phi^{-i} = \alpha \frac{\partial}{\partial \phimi}J^i\left(\phi^i, \phi^{-i} + \sum_{q'=1}^{q-1} \Delta_{q'} \phi^{-i}\right)  \right]
\end{align}
\normalsize
with $\frac{\mathrm{d}}{\mathrm{d}\phi^i}$ the total derivative taking into account the effect of $\phi^i$ on the parameter updates $\Delta_q \phimi$, and $\frac{\partial}{\partial \phimi}$ the partial derivative.

Despite the apparent dissimilarities of the two algorithms, we now show in Theorem~\ref{thrm:coala_lola} that as a special case of \texttt{COALA-PG}, we can estimate the gradient of a similar objective to the LOLA-DICE method. We consider a mixed agent group of meta agents $(\phi^i, \phimi)$, and naive learners. 

\begin{theorem}\label{thrm:coala_lola}
Assuming that (i) the COALA policy is only conditioned on inner episode histories $x_{m,t}^{i,b}$ (instead of long histories $h_l^{i,b}$), with subscript $m=1,\ldots,M$ indexing histories over meta-steps, and (ii) the naive learners are initialized with the current parameters $\phimi$ of the other agents, then the \texttt{COALA-PG} update on the batched co-player shaping POMDP equals
\begin{align}
     \nabla_{\phi^i} \bar{J}(\phi^i) = \frac{\mathrm{d}}{\mathrm{d}\phi^i}\sum_{m=0}^{M}\expect{\bar{P}^{\phi^i}}{J^i\left(\phi^i, \phi^{-i} + \sum_{q=1}^m \Delta \phi^{-i}(\bar{x}_{q,T})\right)}
\end{align}
with $\Delta \phi^{-i}(\bar{x}_{q,T})$ the naive learner's update based on the batch of inner-episode histories $\bar{x}_{q,T}$. 
\end{theorem}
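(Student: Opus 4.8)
The plan is to reduce this statement to the long-history decomposition already established in Appendix~\ref{apx:proofs}: it is in fact the same identity, so I would reproduce its two-stage argument, first rewriting the long-horizon shaping return as a sum of per-meta-step inner-episode returns, and then identifying the resulting gradient with a total derivative.

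First I would decompose $\bar{J}(\phi^i) = \expect{\bar{P}^{\phi^i}}{\frac{1}{B}\sum_{b=1}^B \sum_{t=0}^{MT} R_t^{i,b}}$ by splitting the sum over the long time index $t$ into $M$ contiguous blocks, one per inner episode, each running from $t = mT+1$ to $(m+1)T$. Because the batched environment resets at every inner-episode boundary, each block corresponds to a fresh play of the underlying POSG, so the block-wise rearrangement is exact.

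Second I would condition on the batched long history $\bar{h}_{mT}$ at the start of inner episode $m+1$ and invoke assumption (i): since the COALA policy $\pi^i$ reads only the current inner-episode history $x_{m,t}^{i,b}$ and discards everything before the boundary, the law of the block's trajectories depends on the past solely through the current policy parameters. Combined with assumption (ii), that naive learners start from $\phimi$ and have accumulated the updates $\sum_{q=1}^m \Delta\phimi(\bar{x}_{q,T})$ by meta-step $m$, this lets me replace the inner conditional expectation by the POSG return evaluated at the current parameter pair, yielding
\[
\bar{J}(\phi^i) = \sum_{m=0}^{M-1}\expect{\bar{P}^{\phi^i}}{J^i\!\left(\phi^i, \phimi + \sum_{q=1}^m \Delta\phimi(\bar{x}_{q,T})\right)}.
\]
Third I would differentiate. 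The key observation is that the policy gradient of $\bar{J}$ already accounts for every channel through which $\phi^i$ shapes the induced distribution $\bar{P}^{\phi^i}$. Here there are exactly two such channels: the direct dependence of each $J^i$ on its first argument, and the indirect dependence through the naive updates $\Delta\phimi(\bar{x}_{q,T})$, which are functions of the history batches $\bar{x}_{q,T}$ whose law is controlled by $\phi^i$. Collecting both channels is precisely the total derivative $\frac{\mathrm{d}}{\mathrm{d}\phi^i}$, which gives the claimed identity.

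The main obstacle is the conditioning step: one must argue carefully that assumption (i) makes each per-block expected return depend on the history only through the current parameter pair, so that the block collapses cleanly to a single $J^i$ evaluation. This is where the memoryless-across-boundaries structure of the policy does the real work, decoupling the blocks given the parameter state; the final differentiation is then essentially bookkeeping that matches the two dependency paths to the total derivative.
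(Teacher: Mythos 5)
Your proposal is correct and follows the paper's own proof essentially step for step: the same block decomposition of $\bar{J}(\phi^i)$ into per-inner-episode returns, the same conditioning on $\bar{h}_{mT}$ with assumption (i) collapsing each block to $J^i\bigl(\phi^i, \phimi + \sum_{q=1}^m \Delta\phimi(\bar{x}_{q,T})\bigr)$, and the same identification of the policy gradient with the total derivative over the direct and indirect (via $\Delta\phimi$) dependency channels. No meaningful difference in approach.
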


Compared to the LOLA-DICE gradient of Eq.~\ref{app_eqn:lola-dice}, there are two main differences. (i) LOLA-DICE assumes that the naive learner takes a deterministic gradient step on $J$, whereas \texttt{COALA-PG} takes into account that the naive learner takes a stochastic policy gradient step based on the current minibatch of inner policy histories. When $J$ is linear, we can bring the expectation in the \texttt{COALA-PG} expression inside, resulting in the deterministic gradient, but in general $J$ is nonlinear. (ii) LOLA-DICE considers only the average inner episode return $J$ after $M$ naive learner updates, whereas \texttt{COALA-PG} considers the whole learning trajectory. 

The above two differences are rooted in the distinction of the objectives on which LOLA-DICE and \texttt{COALA-PG} estimate the policy gradient. A bigger difference arises on \textit{how} both methods estimate the policy gradient. As LOLA-DICE assumes that the naive learner takes deterministic gradient updates, their resulting gradient estimator backpropagates explicitly through this learning update, resulting in higher-order derivative estimates. By contrast, \texttt{COALA-PG} assumes that the naive learner takes stochastic gradient updates, and can hence estimate the policy gradient through measuring the effect of $\phi^i$ on the distribution of $\bar{x}_{l,T}$, and thereby on the resulting co-player parameter updates, without requiring higher-order derivatives. 

We emphasize that both estimators have their strengths and weaknesses. When LOLA-DICE has access to an accurate model of the co-players and their learning algorithm, explicitly backpropagating through this learning algorithm and model can provide detailed gradient information. However, when the co-player and learning algorithm model are inaccurate, the detailed higher-order derivative information can actually hurt \citep{zhao_proximal_2022}. In this case, a higher-order-derivative-free approach such as \texttt{COALA-PG} can be beneficial.

In sum, in contrast to existing methods, we introduce a return estimator that avoids higher-order derivative computations while taking into account that other agents are themselves undergoing minibatched reinforcement learning.
Furthermore, while methods of the LOLA type require an explicit model of the opponent and its update function, \texttt{COALA-PG} allows for more flexible modeling. Importantly, this enables exploiting the information in long histories that cover multiple inner episodes with a powerful sequence model, for which credit assignment can be carried out over long time scales. This makes it possible to combine implicit co-player modeling with modeling the \emph{learning} of co-players, a process known as algorithm distillation \citep{laskin_-context_2022}.

\section{Deriving prior shaping algorithms from Eq.~\ref{eqn:shaping_problem}}
\label{apx:prior-shapers}
The general shaping problem due to \citet{lu_model-free_2022} presented in Eq.~\ref{eqn:shaping_problem}, which we repeat below for convenience, captures many of the relevant co-player shaping techniques in the literature. 
\begin{align}
    \max_{\mu}\bbE_{\tilde{P}_i(\phi^i_0, \phimi_0)}\expect{\tilde{P}^\mu}{\sum_{m=m_{\mathrm{start}}}^M J^i(\phi_m^i, \phimi_m)},
\end{align}
M-FOS solves this co-player shaping problem without making further assumptions \citep{lu_model-free_2022}. Good Shepherd uses a stateless meta-policy $\mu(\phi^i_m;\theta) = \delta(\phi^i_m=\theta)$, with $\delta$ the Dirac-delta distribution \citep{balaguer_good_2022}. Meta-MAPG \citep{kim_policy_2021} meta-learns an initialization $\phi^i_0=\theta$ in the spirit of model-agnostic meta-learning \citep{finn_model-agnostic_2017}, and lets every player learn by following gradients on their respective objectives. The meta-value learning method \citep{cooijmans_meta-value_2023} models the meta-policy $\mu(\phi^i_{m+1} \mid \phi^i_m, \phimi_m;\theta)$ as the gradient on a meta-value function $V(\phi^i_m, \phimi_m)$ parameterized by $\theta$. Finally, we recover a single LOLA update step \citep{foerster_learning_2018} by initializing $(\phi^i_0, \phimi_0)$ with the current parameters of all agents, taking $M=m_{\mathrm{start}}=1$, using a stateless meta-policy $\mu(\phi^i_m;\theta) = \delta(\phi^i_m=\theta)$ and taking a single gradient step w.r.t.~$\theta$, instead of solving the shaping problem to convergence.

\section{Detail on baseline methods}\label{app:baseline_detail}

\subsection{Batch-unaware COALA PG}
We remind that the policy gradient expression for COALA is as follows

\begin{align}
    \nabla_{\phi^i} \bar{J}(\phi^i) =  \expect{\bar{P}^{\phi^i}}{\sum_{b=1}^B \sum_{l=1}^{MT} \nabla_{\phi^i} \log \pi^i(a^{i,b}l \mid h^{i,b}_l) \left( \frac{1}{B} \sum_{b'=1}^B \sum_{k=l}^{MT} R_k^{i,b'}\right)}.
\end{align}

The batch-unaware COALA PG is the naive baseline, consisting in applying policy gradient methods to individual trajectories in a batch, i.e.

\begin{align}
    \hat{\nabla}_{\phi^i}^\text{batch-unaware} \bar{J}(\phi^i) =  \expect{\bar{P}^{\phi^i}}{\frac{1}{B}\sum_{b=1}^B \sum_{l=1}^{MT} \nabla_{\phi^i} \log \pi^i(a^{i,b}l \mid h^{i,b}_l)  \sum_{k=l}^{MT} R_k^{i,b}}.
\end{align}

Figure \ref{fig:coala_vs_batchunaware} visualizes the main difference between the COALA-PG update and the batch-unaware COALA-PG update.

\begin{figure}[t]
    \centering
    \includegraphics[width=0.8\linewidth]{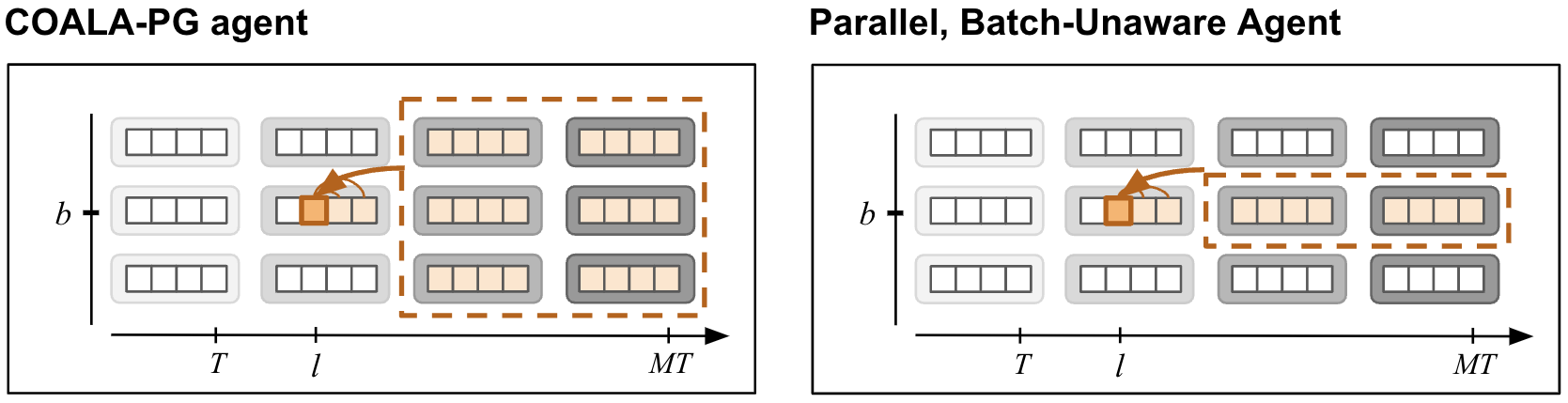}
    \caption{Diagram visualizing the difference between the COALA-PG update and the batch unaware COALA policy gradient update.}
    \label{fig:coala_vs_batchunaware}
\end{figure}

\subsection{M-FOS}
\citet{lu_model-free_2022} consider both a policy gradient method and evolutionary search method to optimize the shaping problem of Eq. \ref{eqn:shaping_problem}. Here, we focus on the M-FOS policy gradient method, which was used by \citet{lu_model-free_2022} in their Coin Game experiments, which is their only experiment that goes beyond tabular policies. M-FOS uses a distinct architecture with a recurrent neural network as inner policy, which receives an extra conditioning vector as input from a meta policy that processes inner episode batches. As shown by \citet{khan_scaling_2024}, we can obtain better performance by combining both inner and meta policy in a single sequence model with access to the full history $h_l$ containing all past inner episodes. Hence, we use this improved architecture for our M-FOS baseline, which we train by the policy gradient method proposed by \citet{lu_model-free_2022}. This allows us to use the same architecture for both M-FOS and COALA.

As the manuscript of \citep{lu_model-free_2022} does not explicitly mention how to deal with the inner-batch dimension of a naive learner, we reconstruct the learning rule from their publicly available codebase. By denoting by $m_l$ the inner episode index corresponding to the meta episode time step $l$, the update is as follows:

\begin{align}
    \hat{\nabla}_{\phi^i}^\text{M-FOS} \bar{J}(\phi^i) =  \expect{\bar{P}^{\phi^i}}{\sum_{b=1}^B \sum_{l=1}^{MT} \nabla_{\phi^i} \log \pi^i(a^{i,b}_l \mid h^{i,b}_l) \left(\sum_{k=l}^{T m_l} R_k^{i,b} + \frac{1}{B} \sum_{b'=1}^B \sum_{k=T m_l+1}^{MT} R_k^{i,b'}\right)}.
\end{align}

We note that when taking the inner-episode boundaries into account in the COALA-PG update, due to some terms disappearing from the expectation, the expression becomes
\begin{align}
    \nabla_{\phi^i} \bar{J}(\phi^i) =  \expect{\bar{P}^{\phi^i}}{\sum_{b=1}^B \sum_{l=1}^{MT} \nabla_{\phi^i} \log \pi^i(a^{i,b}_l \mid h^{i,b}_l) \left(\frac{1}{B}\sum_{k=l}^{T m_l} R_k^{i,b} + \frac{1}{B} \sum_{b'=1}^B \sum_{k=T m_l+1}^{MT} R_k^{i,b'}\right)}.
\end{align}

One can see that the M-FOS update rule closely resembles COALA-PG, except for the fact that it is lacking a factor $\frac{1}{B}$ in front of the reward of the current inner episode. As we show in our experiments, this biases the policy gradient and introduces inefficiencies in the optimization.

\section{Additional experimental results}\label{app:extra_exp_results}
\subsection{Balancing gradient contributions}
The main difference between COALA-PG (c.f. Eq. \ref{eqn:coala-pg}) and M-FOS is the scaling of the current reward-episode. 
The main difference between COALA-PG \eqref{eqn:coala-pg} and M-FOS \eqref{eqn:mfos} is that COALA-PG scales the return of the current inner episode by $\frac{1}{B}$, whereas M-FOS does not. This scaling is crucial, as the the influence of an action on future inner episodes is of order $O(\frac{1}{B})$ because the naive learner averages its update over the $B$ inner episode trajectories. Hence, by scaling the inner episode return by $\frac{1}{B}$, COALA-PG ensures that both contributions have the same scaling w.r.t. $B$, and finally by summing the policy gradient over the minibatch instead of averaging, we end up with a policy gradient of $O(1)$. M-FOS and batch-unaware COALA do not scale the current inner episode return, which causes the co-player shaping learning signal to vanish w.r.t. the current inner episode return, resulting in poor shaping performance.

\begin{figure}
    \centering
    \includegraphics[width=0.5\linewidth]{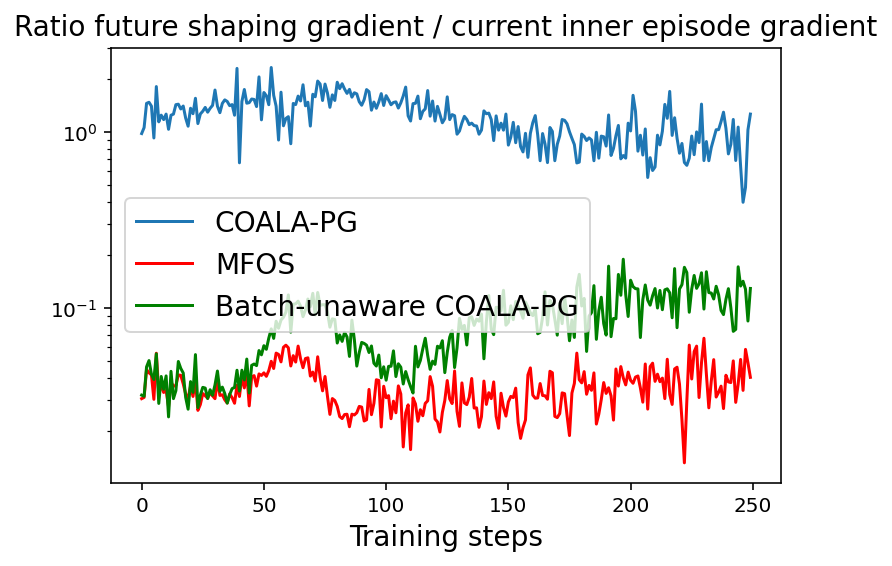}
    \caption{Y-axis: Ratio of the magnitude of the policy gradient contribution arising from future inner episode returns (the co-player shaping learning signal) w.r.t. the gradient contribution arising from the current inner episode return. X-axis: meta gradient steps on the iterated prisoner's dilemma, in a mixed group setting corresponding to the setting of Fig. \ref{fig:PG-IPD}C, and an increased meta-batch size of 2048 to reduce the variance of the gradient estimates.}
    \label{fig:gradient_balance}
\end{figure}

Figure \ref{fig:gradient_balance} confirms that empirically, COALA-PG correctly balances the gradient contributions from the current inner episode return with those from future inner episode returns. In contrast, M-FOS and batch-unaware COALA have unbalanced gradient contributions, with the co-player shaping gradient contribution vanishing w.r.t. the current inner episode return contribution.

\subsection{Detailed results on \texttt{CleanUp-lite}}

\begin{figure}[h]
\centering
\includegraphics[width=1.0\linewidth]{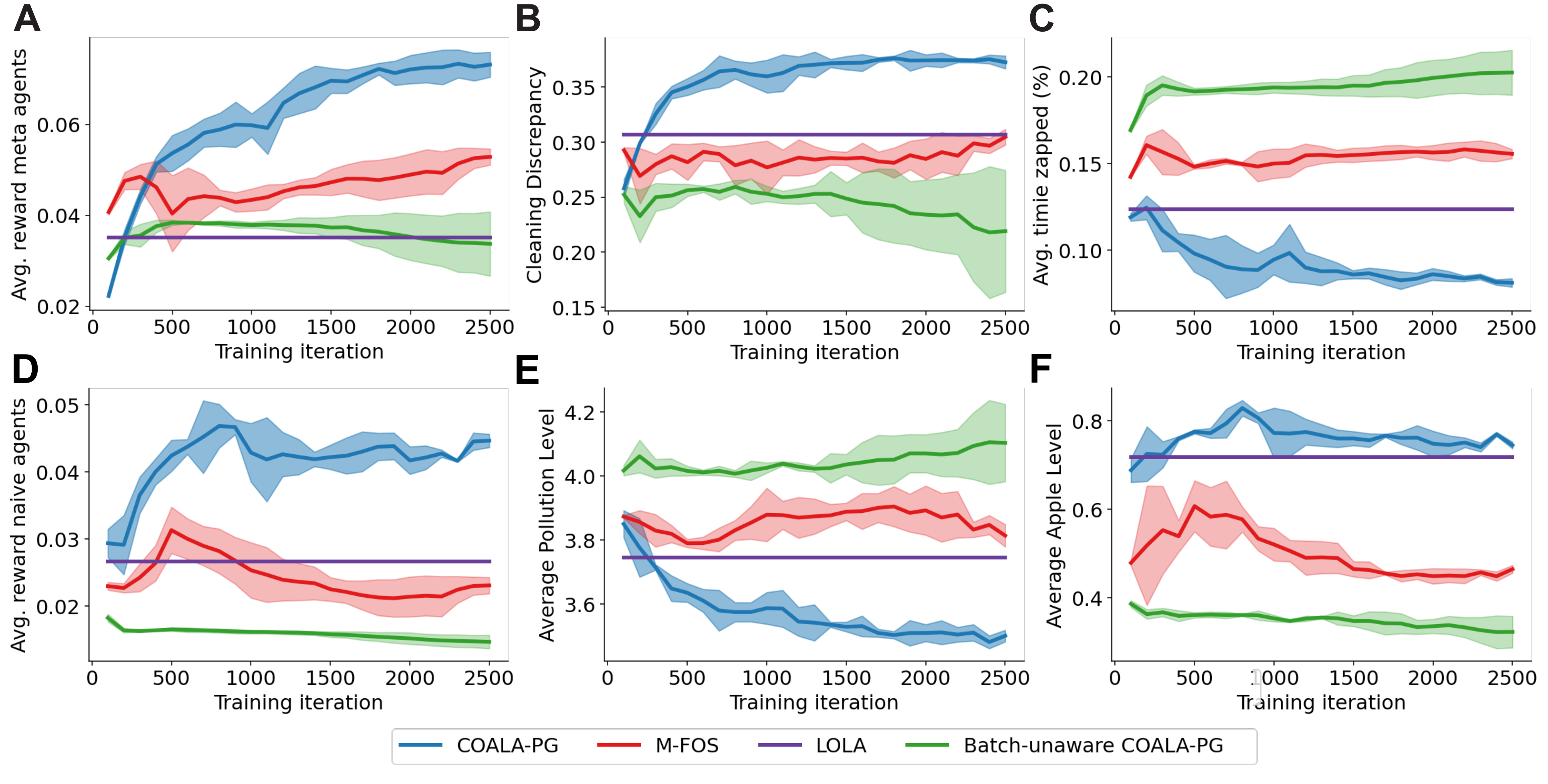}
\vspace{-2mm}
  \caption{\textbf{Agents trained by \texttt{COALA-PG} against naive agents only successfully shape them in \texttt{CleanUp-lite}}. (A) \texttt{COALA-PG}-trained agents better shape naive opponents compared to baselines, obtaining higher return. (B and C)  Analyzing behavior within a single meta-episode after training reveals that COALA outperforms baselines and shapes naive agents, (i) exhibiting a lower cleaning discrepancy (absolute difference in average cleaning time between the two agents), and (ii) being less often zapped. (D) Average reward of naive learners is higher when playing with \texttt{COALA-PG} agents compared to other agents. (E and F): \texttt{COALA-PG} results in lower average pollution level and higher average apple level. Shaded regions indicate standard deviation computed over 5 seeds.}
    \label{fig_app:cleanup-shaping}
    \vspace{-1\baselineskip}
\end{figure}

\begin{figure}[h]
    \includegraphics[width=1.0\linewidth]{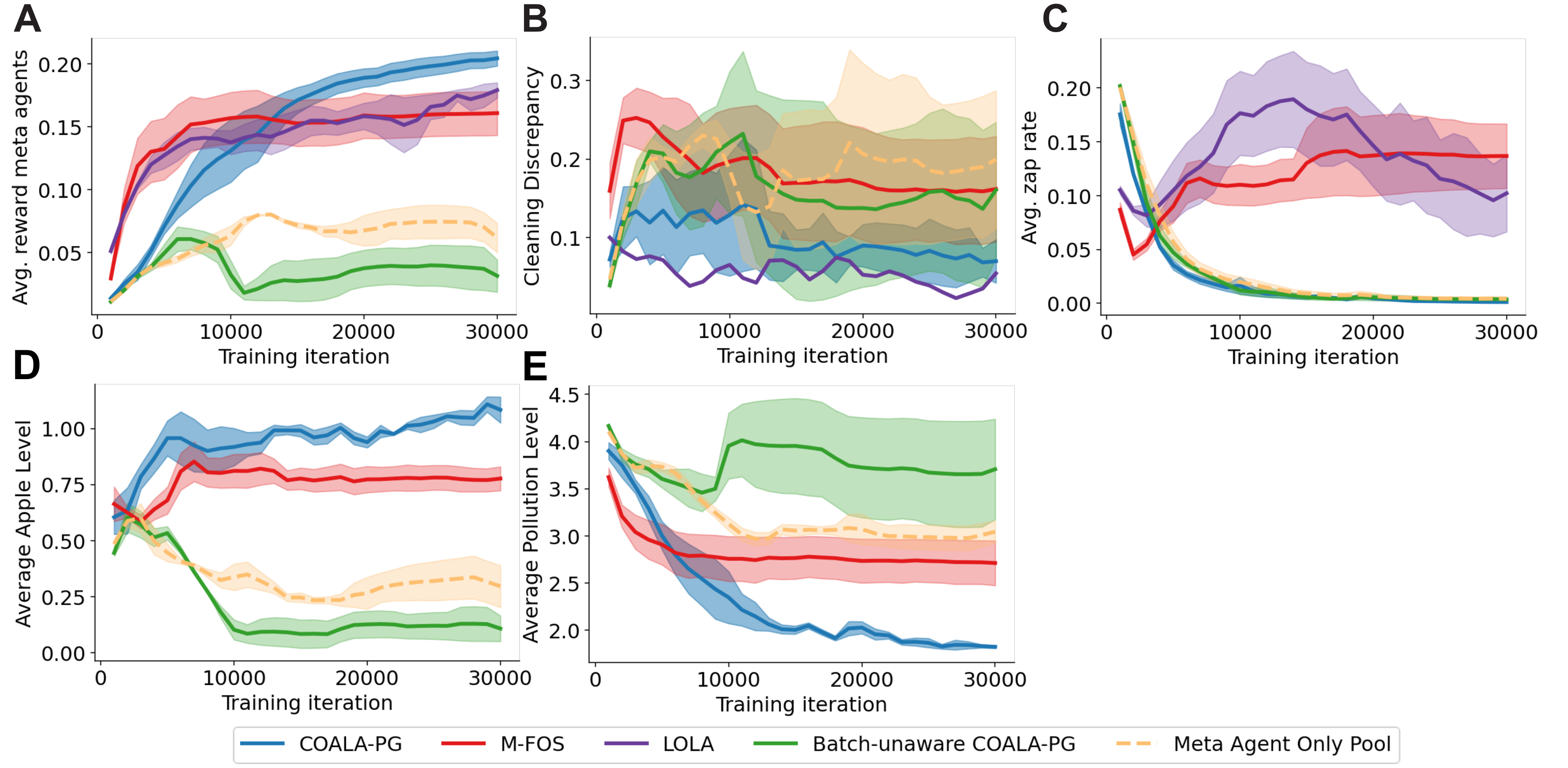}
  \caption{\textbf{Agents trained with \texttt{COALA-PG} against a mixture of naive and other meta agents learn to cooperate in \texttt{CleanUp-lite}}. (A) \texttt{COALA-PG}-trained agents obtain higher average reward than baseline agents when playing against each other. (B and C): \texttt{COALA-PG} leads to a more fair division of cleaning efforts and lower zapping rates. (D and E): \texttt{COALA-PG} results in lower average pollution level and higher average apple level. Shaded regions indicate standard deviation.}
    \label{fig_app:cleanup-pool}
    \vspace{-0.3cm}
\end{figure}

\subsection{Training a \texttt{COALA-PG} agent versus an M-FOS agent on iterated prisoner's dilemma}\label{app:round-robin}
We investigate training a \texttt{COALA-PG} versus an M-FOS agent, our strongest performing meta-agent baseline. In this setup, we still use a mixture of naive learners and meta agents. Only now when a \texttt{COALA-PG} agent either plays against an M-FOS agent or naive learner, and an M-FOS agent either plays against a \texttt{COALA-PG} agent or a naive learner. We found that \texttt{COALA-PG} successfully shapes M-FOS into cooperation, reaching average rewards of $0.850 \pm 0.037$ for \texttt{COALA-PG} and $0.853 \pm 0.017$ for M-FOS. This is in stark contrast to when M-FOS agents only play against other M-FOS agents and naive learners, which converges to mutual defection. 

We did not investigate training a \texttt{COALA-PG} agent versus a LOLA agent, as the training setup of both agents is fundamentally different, and would pose an unfair disadvantage for LOLA agents. A LOLA agent learns on the same timescale as naive learners, taking a few look-ahead steps into account in its update. Hence, the full learning trajectory of a LOLA agent is considered as one meta trajectory for a \texttt{COALA-PG} agent. As \texttt{COALA-PG} agents would play many different meta trajectories, always against a freshly initialized LOLA agent, this would give a \texttt{COALA-PG} detailed information about the learning behavior of LOLA agents, whereas LOLA agents are left in the dark as they cannot observe the \textit{meta} updates of \texttt{COALA-PG}. Hence, \texttt{COALA-PG} would extort LOLA agents similar to naive learners.

\section{Software}
The results reported in this paper were produced with open-source software. We used the Python programming language together with the Google JAX \citep{bradbury_jax_2018} framework, and the NumPy \citep{harris_array_2020}, Matplotlib \citep{hunter_matplotlib_2007}, Flax  \citep{heek_flax_2024} and Optax \citep{babuschkin_deepmind_2020} packages.

\end{document}